\definecolor{myPink}{rgb}{1.0, 0.75, 0.8}
\definecolor{myGreen}{rgb}{0.5, 0.8, 0.5}
\definecolor{myBlue}{rgb}{0.4, 0.6, 0.9}
\definecolor{mysilver}{rgb}{0.75, 0.75, 0.75}
\newtheorem{proposition}{Proposition}
\newcommand{\nickname}{ART\xspace}
\title{ART: Adaptive Reasoning Trees for Explainable Claim Verification}
\author{
    Sahil Wadhwa\textsuperscript{$\spadesuit$}, 
    Himanshu Kumar\textsuperscript{$\spadesuit$}, 
    Guanqun Yang\textsuperscript{$\clubsuit$}, \\ 
    {\bf Abbaas Alif Mohamed Nishar\textsuperscript{$\spadesuit$},} 
    {\bf Swapnil Shinde\textsuperscript{$\spadesuit$},} 
    {\bf Pranab Mohanty\textsuperscript{$\spadesuit$},} 
    {\bf Yue Wu\textsuperscript{$\spadesuit$}} \\
    \texttt{\{sahil.wadhwa,himanshu.kumar2,abbaasalif.mohamednishar,swapnil.shinde2,} \\ 
    \texttt{pranab.mohanty,yue.wu\}@capitalone.com,}
    \texttt{guanqun.yang@outlook.com} \\
    \texttt{\textsuperscript{$\spadesuit$}Capital One,}
    \texttt{\textsuperscript{$\clubsuit$}Stevens Institute of Technology}
}
\begin{document}
\maketitle



\begin{abstract}

Large Language Models (LLMs) are powerful candidates for complex decision-making, leveraging vast encoded knowledge and remarkable zero-shot abilities. However, their adoption in high-stakes environments is hindered by their opacity; their outputs lack faithful explanations and cannot be effectively contested to correct errors, undermining trustworthiness. In this paper, we propose \textit{ART (Adaptive Reasoning Trees)}, a hierarchical method for claim verification. The process begins with a root claim, which branches into supporting and attacking child arguments. An argument's strength is determined bottom-up via a pairwise tournament of its children, adjudicated by a judge LLM, allowing a final, transparent and contestable verdict to be systematically derived which is missing in methods like Chain-of-Thought (CoT). We empirically validate ART on multiple datasets, analyzing different argument generators and comparison strategies. Our findings show that ART's structured reasoning outperforms strong baselines, establishing a new benchmark for explainable claim verification which is more reliable and ensures clarity in the overall decision making step.

\end{abstract}


\section{Introduction}\label{sec:intro}


\begin{figure*}
\centering
\includegraphics[width=\linewidth]{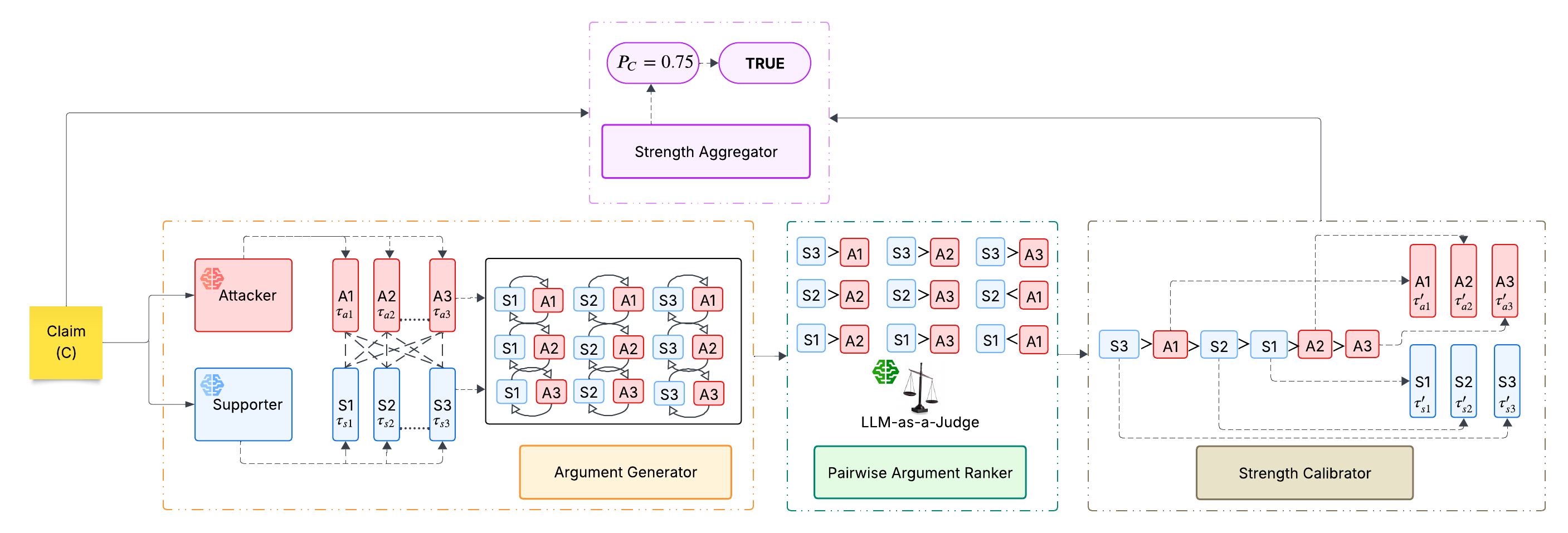}
\caption{Overview of the ART (Adaptive Reasoning Trees) framework with a tree of \textit{depth=$1$} and \textit{breadth=$3$} for illustration. Each breadth adds $b$ support and attack arguments. A claim is first processed by the \textbf{Argument Generator}, which creates a tree of supporting and attacking arguments. Arguments of opposing stances are then pitted against each other in a pairwise tournament. The \textbf{Pairwise Argument Ranker}, an LLM-as-a-judge, evaluates these pairs to dynamically update the strength of each argument. Finally, the \textbf{Strength Aggregator} consolidates these updated scores into a final probability for the claim's veracity.}
\label{fig:art_framework}
\end{figure*}

The advent of Large Language Models (LLMs) has marked a significant milestone, primarily due to their advanced reasoning capabilities~\cite{grattafiori2024llama3herdmodels, openai2024gpt4technicalreport}. This has enabled novel solutions to a wide array of complex tasks, with their impact being especially profound in specialized domains requiring nuanced judgment, such as medicine~\cite{Maity2025, 10385557} and law~\cite{kant2025robustlegalreasoningharnessing}. Despite the rapid advancement of their reasoning capabilities, a significant and persistent challenge in Large Language Models (LLMs) is their propensity for hallucination~\cite{Huang_2025, agrawal-etal-2024-language, guerreiro2023hallucinationslargemultilingualtranslation}. This phenomenon produces content that appears credible but is inaccurate or illogical. Such unreliability poses a critical barrier to their deployment in truth-sensitive applications. For instance, in automated claim verification, a hallucinating model could incorrectly validate misinformation~\cite{augenstein2023factualitychallengeseralarge}, while in generating counter-narratives to combat hate speech, it could inadvertently introduce falsehoods, undermining the very purpose of the intervention~\cite{wadhwa2024northeasternunimultilingualcounterspeech}. 
A further challenge is the inherent opacity of LLM reasoning, which conflicts with the demands of modern decision-making. Processes in critical domains require a high degree of transparency to allow for auditing, accountability, and the ability for users to contest an outcome~\cite{leofante2024contestableaineedscomputational, 10.1007/s00146-021-01251-8}. Because the internal logic of LLMs is largely inscrutable, they fail to meet this requirement, creating a fundamental barrier to their responsible deployment in high-stakes environments.

To address these challenges, we propose \textbf{\textit{ART (Adaptive Reasoning Trees)}}, a novel framework designed to enhance the transparency of automated claim verification. ART decomposes a claim into a hierarchy of supporting and attacking arguments, which then compete in a pairwise tournament adjudicated by a judge LLM. To ensure objectivity, our framework employs a dedicated LLM-as-a-Judge~\cite{rad2025refininginputguardrailsenhancing, adlakha2024evaluatingcorrectnessfaithfulnessinstructionfollowing} to perform pairwise comparisons and determine the relative strength of competing arguments. This role is deliberately separated from argument generation to prevent the inherent self-affirmation bias of a model evaluating its own outputs, which is critical for the impartiality and robustness of the final verdict.

In short, our contributions are three-fold:
\begin{itemize}
    \item \textit{ART} employs a novel tree-based structure. It uses specialized \textit{\textbf{Attack}} and \textit{\textbf{Support}} LLMs to generate arguments for and against a claim, assigning each an initial strength score.
    \item To the best of our knowledge, we are the first to introduce a tournament-style framework for claim verification where every support argument is pitted against every attack argument in a series of pairwise competitions. The outcomes are then used to dynamically update each argument's strength score via the \textit{Bradley-Terry}\footnote{\url{https://tinyurl.com/4x393csa}} model, which informs the final prediction.
    \item ART achieves state-of-the-art results on multiple claim verification datasets when tested on a variety of open-source LLMs as compared to other baselines like \textit{ArgLLM} ~\cite{freedman2025argumentativelargelanguagemodels}, Direct Prompting and Chain-of-thought (CoT)~\cite{wei2022chain}.

\end{itemize}

\textbf{Note.} The terms “claim” and “argument” are used interchangeably.

\section{Related Work}\label{sec:related}

\subsection{Explainable LLM-based Decision Making}
Large language models have demonstrated impressive reasoning abilities, but explaining their decisions in a trustworthy manner remains challenging. \textbf{Chain-of-Thought (CoT)} prompting is a common approach to induce step-by-step explanations from LLMs, yielding intermediate reasoning steps rather than just final answers~\cite{wei2022chain,kojima2022large}. While CoT can improve performance on complex tasks, its generated rationales are not guaranteed to be \textit{faithful} to the model’s actual decision process or even factually correct. Models often produce convincing but \textit{hallucinated} explanations that violate factual context~\cite{zhao2024explainability,huang2025hallucinationsurvey}. This unreliability undermines trust, especially in high-stakes domains. Moreover, slight variations in prompts or instructions can significantly alter LLM outputs, showing brittleness in instruction-following~\cite{singh2024rethinkinginterpretability}. Another challenge is evaluating the correctness of LLM-generated explanations. Human evaluation is costly, and having a model judge its own reasoning introduces bias. Recent work proposes using separate LLMs or fine-tuned modules as judges~\cite{chan2023chateval,adlakha2024evaluatingcorrectness}. While this direct usage of \textbf{LLM-as-a-Judge} paradigm improves consistency, judges drawn from the same model family may still inherit the generator’s biases unless a proper separation of roles and multiple perspectives are enforced. To address this, ART explicitly separates the roles of \textit{solver} and \textit{evaluator}: supporting and attacking arguments are generated by dedicated models, and their conflicts are adjudicated by an independent Judge LLM. 
By structuring reasoning as a tree of opposing arguments and resolving disputes through impartial judging, ART ensures that each step of the decision process can be transparently verified and, if necessary, contested.

\subsection{Automated Claim Verification}
Automated fact-checking has been studied extensively in NLP. Early approaches retrieved evidence from Wikipedia or knowledge bases and applied classifiers to label claims as supported or refuted~\cite{thorne2018fever,nakov2021factcheckingsurvey,shi2016factcheckingkb}. Retrieval-augmented generation (RAG) further improved factual grounding by conditioning on external sources~\cite{lewis2020rag,chen2023benchmarkrag}. However, these systems often provide labels without detailed justifications~\cite{Guo2021ASO}.

Recent work has turned to LLMs for claim verification, aiming to improve accuracy over the traditional three-step pipeline by leveraging their generative ability to produce justifications~\cite{Guo2021ASO,Vladika2025StepbyStepFV}.
Zero-shot or few-shot CoT prompting allows models like GPT-3/4 to generate verdicts with rationale~\cite{lee2023promptfactcheck}, but these rationales may contain hallucinations or omit critical counter-evidence~\cite{augenstein2023factualitychallenges}. 
Argument mining approaches aim to improve transparency by generating pro/con arguments~\cite{hidey2018evidence,atanasova2020generatingfactchecking}. Our work differs in that ART structures reasoning as a hierarchical argumentation tree. Each leaf is an evidence-backed argument for or against the claim, and the verdict emerges through systematic tournament-style evaluation.

\subsection{Multi-Agent Debate}
Ensembling multiple reasoning paths or models improves robustness. Self-consistency decoding aggregates multiple CoT samples to reduce individual errors~\cite{wang2022selfconsistency}. Multi-agent systems extend this by having multiple LLMs collaborate or compete. The idea of AI debate was introduced by ~\cite{irving2018aidebate}, where two agents argue and a judge makes the decision. Recent implementations show that multi-agent debate improves factual accuracy~\cite{du2023multiagentdebate,khan2024debatepersuasive}. Debate has also been used for evaluation, e.g., ChatEval~\cite{chan2023chateval} and Debate Helps~\cite{michael2023debatehelps}.

Another line of work uses specialized roles or personas for different LLMs. Wang et al.~\cite{wang2023synergy} explored multi-persona collaboration, while Jung et al.~\cite{jung2022critic} proposed critic LLMs to identify flaws. These role-based ensembles demonstrate the benefit of diversity in reasoning styles. ART builds on this idea with distinct \textit{Supporter}, \textit{Attacker}, and \textit{Judge} roles, and introduces a structured tournament that ensures balanced evaluation between opposing views.

\subsection{Comparison with ArgLLMs}
\label{sec:comparison_argllm}
Our work is related to the recent proposal of \textit{Argumentative LLMs (ArgLLMs)}~\cite{freedman2025argumentativelargelanguagemodels}, which embeds LLM-generated pro/con arguments into formal \textit{quantitative bipolar argumentation frameworks} (QBAFs). Within this paradigm, argument strengths are fixed and aggregated using deterministic semantics (e.g., DF-QuAD), yielding provable properties of contestability and logical soundness. This line of work makes an important contribution by grounding LLM outputs in formal argumentation theory. ART, rather than imposing static semantics, adopts a \textit{dynamic pairwise tournament} mechanism: every support argument is directly compared against every attack argument, with outcomes adjudicated by an independent Judge LLM. This shift enables reasoning to adapt flexibly to the relative persuasiveness of arguments, rather than relying on pre-defined aggregation rules.  Moreover, ART enforces \textit{role specialization} by explicitly separating the Supporter, Attacker, and Judge models. This avoids the \textit{self-affirmation bias} inherent in ArgLLMs, where a single model family is often reused for both argument generation and evaluation. By diversifying roles across models, ART enhances impartiality and robustness. 

In summary, ART builds on the intuition that argumentative prompting improves transparency, but diverges from ArgLLMs by introducing a dynamic, role-separated, and empirically validated framework. This makes ART better suited for real-world claim verification tasks, where adaptability and benchmarked performance are as critical as theoretical guarantees.

\section{Methodology}\label{sec:method}

\nickname structures the verification process as a binary reasoning tree in which the root node, referred to as the \emph{claim node}, represents the input claim under examination, and all other nodes are \emph{argument nodes} that recursively support or attack their parent. 
This claim verification process is explicit with each comparison in the tree contributing to the overall decision.  
\nickname operates in four main phases: \emph{argument generation}, \emph{pairwise argument ranking}, \emph{strength calibration} and \emph{strength aggregation}. Figure \ref{fig:art_framework} presents an overview of ART showing an example tree with a depth of one and a breadth of three.

\subsection{Argument Generation} 
Each argument node is instantiated by a debater LLM prompted with a specific persona of either \emph{supporter} (${S}$) or \emph{attacker} (${A}$). 
The supporter produces arguments in favor of the parent node, while the attacker generates counterarguments against it. 
Although in principle the reasoning process could grow without bound, we cap the maximum tree depth at \textit{two} in this work to balance reasoning depth and computational efficiency. \textbf{Note:} Same LLM is used for both support and attack. 

\subsection{Pairwise Argument Ranking} 
Once generated, arguments are evaluated through a comparison procedure by an LLM-as-a-judge ($J$)~\cite{rad2025refininginputguardrailsenhancing, adlakha2024evaluatingcorrectnessfaithfulnessinstructionfollowing}.
For each pair of argument nodes, the scoring LLM selects which of the supporting or attacking child argument is more persuasive given the parent node argument.
A complete traversal of the tree yields $N = \frac{(2b)^{d+1} - 1}{2b - 1}$ nodes of depth $d$ and breadth $b$ with $\frac{(2b)^{d} - 1}{2b-1}*b^2$ pairwise comparisons (see Appendix ~\ref{sec:time_complexity}). 
Importantly, \nickname is \emph{adaptive}: subtrees whose comparisons fall below a confidence threshold can be selectively pruned during \emph{argument verdict} to improve the accuracy of the final decision.

Let $p$ be a parent argument node in the tree, with $k$ supporting and attacking children defined as the sets:
$\mathcal{S}(p) = \{S_1, S_2, \dots, S_k\}$, and
$\mathcal{A}(p) = \{A_1, A_2, \dots, A_k\}$.

For each pair $(a, b)$ where $a \in \mathcal{S}(p)$ and $b \in \mathcal{A}(p)$, a scoring LLM, $J$, selects the more persuasive argument:
$$
\text{outcome}(a, b) = J(a, b \mid p) \in \{a, b\}
$$

\subsection{Strength Calibration}
\label{subsection:strength_calibration}
For each argument $i \in \mathcal{S}(p) \cup \mathcal{A}(p)$, the judge $J$ also calculates its intrinsic strength, denoted $\tau_i$,  referring to how strongly its supports or attacks its parent claim similar to ~\cite{freedman2025argumentativelargelanguagemodels}. 
While an intrinsic strength score, $\tau_i$, provides a baseline measure for an argument, this absolute evaluation, performed in isolation by a judge model $J$, is insufficient for a robust analysis. Such isolated scores struggle to handle redundancy, are susceptible to inconsistent scaling and bias from the judge model, and fail to establish a clear relative ranking.

In contrast, pairwise scoring addresses these limitations by reframing the task as a more constrained and reliable relative judgment: determining which of two arguments is stronger, a simpler task for an LLM~\cite{carterette2008here}. This comparative approach naturally produces a robust ranking that surfaces nuanced qualitative differences and accounts for semantic overlap, which absolute scoring methods cannot capture. This presents a trade-off between the fast but potentially biased intrinsic scores and the robust but computationally expensive pairwise comparisons. To balance this, our approach uses the \textbf{Bradley-Terry (BT) model to calibrate the initial intrinsic strengths}. The BT model analyzes the tournament outcomes to derive a purely relative performance score for each argument. This data-driven score then acts as a scaling factor, adjusting the initial intrinsic value $\tau_a$ up or down based on its competitive success. This hybrid approach retains the baseline evaluation while refining it with robust, comparative data.

\paragraph{Bradley--Terry Calibration.}
Each child argument $i\in \mathcal{S}(p)\cup \mathcal{A}(p)$ has a latent strength $\theta_i>0$. For $a\in\mathcal{S}(p)$ and $b\in\mathcal{A}(p)$, the probability that $a$ is judged stronger than $b$ is
\begin{equation}
P(a \succ b) \;=\; \frac{\theta_a}{\theta_a + \theta_b}
\label{eq:bt_prob}
\end{equation}

Let $U(p)=\mathcal{S}(p)\cup \mathcal{A}(p)$. We estimate $\theta$ by fixed-point iteration using only cross-pairs $(a,b)\in \mathcal{S}(p)\times \mathcal{A}(p)$, then normalize:

\begin{equation}
\label{eq:bt_update}
\begin{aligned}
\theta_a^{(t+1)} &=
\frac{\sum_{b \in \mathcal{A}(p)} \tau_{ab}}
     {\sum_{b \in \mathcal{A}(p)} 
      \frac{\tau_{ab} + \tau_{ba}}{\theta_a^{(t)} + \theta_b^{(t)}} + \varepsilon},
&& a \in \mathcal{S}(p),\\[2pt]
\theta_b^{(t+1)} &=
\frac{\sum_{a \in \mathcal{S}(p)} \tau_{ba}}
     {\sum_{a \in \mathcal{S}(p)} 
      \frac{\tau_{ab} + \tau_{ba}}{\theta_a^{(t)} + \theta_b^{(t)}} + \varepsilon},
&& b \in \mathcal{A}(p),\\[2pt]
\sum_{i \in U(p)}\theta_i^{(t+1)} &= 1 \ \text{s.t.}\ \theta_i \in [0,1]
\end{aligned}
\end{equation}

\noindent Here, $\tau_{ab}$ counts judgments where supporter $a$ is stronger than attacker $b$ (and $\tau_{ba}$ is the reverse), and $\varepsilon>0$ prevents division by zero.
Eq.~\ref{eq:bt_update} is an \textit{Majorize–Minimize} (MM) step for the Bradley–Terry log-likelihood, ensuring monotone ascent and scale-invariance to pairwise counts (duplicates or batch reweighting leave strengths unchanged up to normalization). These properties justify early stopping and yield stable, reproducible calibration (Proposition~\ref{prop:mm_bt}; Algorithm~\ref{alg:calibrate_bt_clearW}).
{
\begin{equation}
{
\begin{aligned}
\tau_i' = \mathrm{clip}_{[0,1]}\!\bigl((1-\lambda)\,\tau_i + \lambda\,\theta_i\bigr),\\
i\in U(p), \lambda\in[0,1]
\end{aligned}
}
\label{eq:bt_scaled}
\end{equation}
}

We update each argument’s weight by \emph{blending} its current value with its BT score (Eq.~\ref{eq:bt_scaled}). Specifically, the blend parameter $\lambda\in[0,1]$ controls how much we trust the BT score—larger $\lambda$ moves the weight closer to $\theta_i$—and we clip the result to $[0,1]$ to avoid extremes. This convex update keeps values bounded, smooths noisy estimates, and still reflects the pairwise outcomes.

\subsection{Strength Aggregation}
The outcomes of the local comparisons are aggregated upward through the tree until a final decision is reached at the claim node. 
This final stage produces both a binary classification label and an interpretable reasoning trace, showing how the balance of supporting and attacking arguments led to the system’s conclusion.

\paragraph{Final Claim Score.}
After BT calibration step, the strength of each argument $p$ is updated in topological order. 
For $p$ with attackers $\mathrm{A}(p)$ and supporters $\mathrm{S}(p)$, the ProductAggregation term is calculated as
\begin{equation}
\alpha_p \;=\;
\prod_{a \in \mathrm{S}(p)} (1 - s_a)
\;-\;
\prod_{b \in \mathrm{A}(p)} (1 - s_b),
\label{eq:alpha_pre}
\end{equation}
where $s_a$ and $s_b$ are the propagated strengths of the respective children. 

Ultimately, the final score of the parent argument node becomes
\begin{equation}
s_p \;=\;
\begin{cases}
\tau_p' - \alpha_p* \tau_p', & \alpha_p > 0, \\[2pt]
\tau_p' - \alpha_p * (1 - \tau_p'),       & \alpha_p \leq 0,
\end{cases}
\label{eq:linfluence_pre}
\end{equation}
where $\tau_p'$ is the BT-calibrated intrinsic weight of $p$.

The score of the root claim $C$ is calculated as
\begin{equation}
P(C) \;=\; s_C
\label{eq:final_claim}
\end{equation}

Argument strengths propagate via a DF\mbox{-}QuAD–style quantitative semantics~\cite{ragodfquad}:
attacker/supporter effects are aggregated (Eq.~\ref{eq:alpha_pre}) and node strengths updated
(Eq.~\ref{eq:linfluence_pre}) deterministically, yielding smooth, discontinuity\mbox{-}free propagation (see Proposition~\ref{prop:dfquad_proof}).
The final claim score \(P(C)\) (Eq.~\ref{eq:final_claim}) is labeled \textbf{True} if \(P(C)>0.5\), else \textbf{False}. We set the classification \emph{threshold} to be $0.5$.

\section{Experiments}\label{sec:exp}
\subsection{Datasets}
\paragraph{MedQA}~\cite{medqa} dataset comprises multiple-choice medical examination questions in English, Simplified Chinese, and Traditional Chinese.
Each example contains a clinical vignette, a question, and five answer choices, with exactly one correct answer.
We utilize the test split of the English subset.
Since the original dataset is in multiple-choice format with answer choices typically expressed as short phrases, prior work has converted these into declarative statements: each answer choice is rewritten as a complete sentence, with the statement corresponding to the correct answer labeled as true~\cite{freedman2025argumentativelargelanguagemodels}.
The processed test set contains $250$ positive and $250$ negative examples.

\paragraph{StrategyQA}~\cite{strategyqa} dataset consists of open-domain questions requiring multi-hop reasoning over common sense and factual knowledge.
Unlike MedQA, each example in StrategyQA contains only a question without additional context, with answers provided as simple phrases.
Similar to MedQA, prior work has processed StrategyQA by converting each question into a declarative statement with an associated truth label~\cite{freedman2025argumentativelargelanguagemodels}.
The processed test set is balanced with $250$ positive and $250$ negative examples.

\paragraph{TruthfulQA} ~\cite{lin2022truthfulqameasuringmodelsmimic} dataset comprises of $250$ positive and $250$ negative claims respectively and is specifically used to evaluate LLMs to gauge if they can correctly identify truthful answers without being deceived by common misconceptions and falsehoods.

\begin{figure*}
\centering
\includegraphics[width=\linewidth]{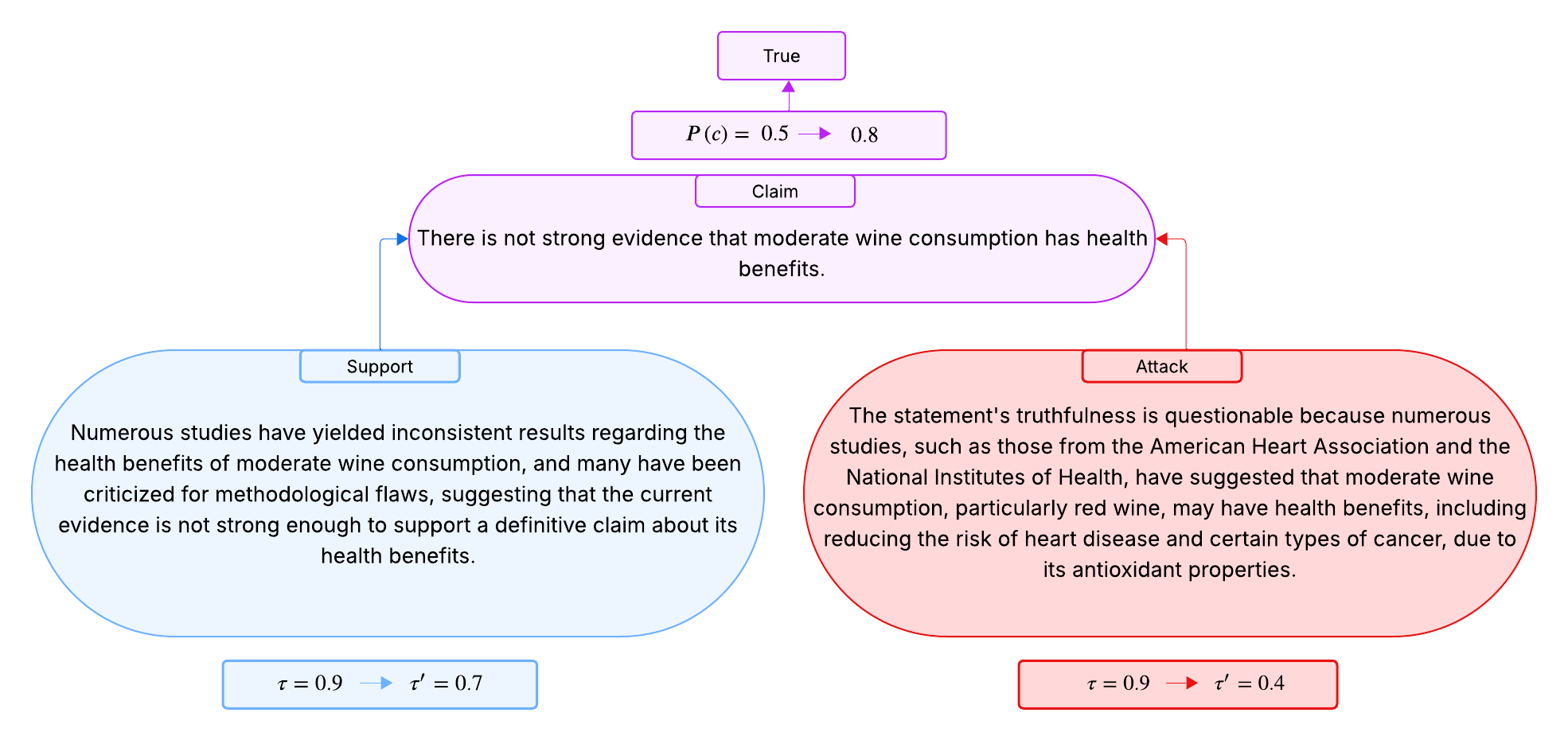}
\caption{Example from \textit{MedQA} with \textit{Llama 3.3 70B Instruct}. Pairwise tournaments calibrate argument strengths by blending intrinsic weights $\tau$ with BT scores $\theta$ to yield $\tau'$. Here, support and attack start with equal intrinsic weights but calibrate to $0.7$ and $0.4$, shifting the claim’s probability from $0.5$ (False) to $0.8$ (True). Without the pairwise and BT calibration steps, the claim would be misclassified as False.}
\label{fig:qual_ex1}
\end{figure*}

\subsection{Baselines and Models}
\paragraph{Models.} 
ART is flexible to the choice of underlying language model. In this paper, we focus on widely used open-source LLMs that combine strong performance across diverse tasks with easy replicability of results. 
Specifically, we experiment with \textit{Llama~3.1 3B-Instruct} and \textit{Llama~3.1 8B-Instruct}, as well as the larger \textit{Llama~3.3 70B-Instruct}~\cite{grattafiori2024llama3herdmodels}, 
together with \textit{Mistral-7B Instruct-v0.2}~\cite{jiang2023mistral7b} and \textit{Mixtral-8x7B-Instruct-v0.1}~\cite{jiang2024mixtralexperts}. 
Unless otherwise specified, all models are queried with the following decoding parameters: 
\texttt{"temperature": 0.2, "max\_new\_tokens": 512, "top\_p": 0.95}. All models are served through the OpenAI-compatible vLLM\footnote{\url{https://docs.vllm.ai/en/latest/}} API and batch executed on 1-4 NVIDIA A100 GPUs depending on model sizes.

\paragraph{Baselines.}
We evaluate our approach against three baselines. 
\textbf{Direct Prompting}: the LLM is directly prompted to verify a claim with a \textit{Yes/No} answer. 
\textbf{Chain-of-Thought (CoT)}~\cite{wei2022chain}: the LLM is additionally asked to generate reasoning alongside its Yes/No decision. 
\textbf{ArgLLM}~\cite{freedman2025argumentativelargelanguagemodels}: this method constructs a tree-like structure but without pairwise comparisons or calibrations. The differences between ART and ArgLLM are discussed in Section~\ref{sec:comparison_argllm} in more detail. All the prompts used for Direct Prompting, CoT, arguments generation, intrinsic strength calculation, pairwise comparisons, are shown in Appendix ~\ref{appendix:prompts}.


\begin{table*}[t]
  \centering
  \resizebox{\linewidth}{!}{%
    \begin{tabular}{ll|cc|ccc|ccc}
      \toprule
      \textbf{Dataset} & \textbf{Model} & \textbf{Direct} & \textbf{CoT} &
      \makecell{\textbf{ArgLLM}\\(Eval=70B)} & 
      \makecell{\textbf{ART}\\(w/o intr.)\\(Eval=70B)} &
      \makecell{\textbf{ART}\\(intr.)\\(Eval=70B)} &
      \makecell{\textbf{ArgLLM}\\(Eval=Self)} & 
      \makecell{\textbf{ART}\\(w/o intr.)\\(Eval=Self)} &
      \makecell{\textbf{ART}\\(intr.)\\(Eval=Self)} \\
      \midrule
      \multirow{5}{*}{MedClaim} 
        & Llama 3.1 3B Instruct      & 51.4 & 57.9 & 58.7  & \underline{63.1} & \textbf{65.0} & 52.2 & \textbf{54.3} & \underline{53.7} \\
        & Llama 3.1 8B Instruct      & 54.5 & \textcolor{blue}{64.6} & 62.9 & \textbf{64.6} & \underline{63.6} & \underline{54.9} & 53.7 & \textbf{56.0} \\
        & Mistral-7B Instruct-v0.2   & 51.4 & 50.4 & 54.3 & \underline{54.8} & \textbf{57.7} & 47.1 & 50.1 & \textbf{52.8} \\
        & Mixtral-8x7B Instruct-v0.1 & 58.7 & 53.5 & \underline{62.3} & \textbf{65.3} & 61.7 & \textbf{61.5} & 56.2 & \underline{57.7} \\
        & Llama 3.3 70B Instruct     & \textcolor{blue}{76.0} & 74.3 & 67.7 & \underline{68.8} & \textbf{69.0} & -- & -- & -- \\
      \midrule
      \multirow{5}{*}{StrategyQA} 
        & Llama 3.1 3B Instruct      & 58.6 & \textcolor{blue}{67.0} & 61.4 & \underline{62.4} & \textbf{62.6} & 55.6 & \textbf{58.6} & \underline{57.6} \\
        & Llama 3.1 8B Instruct      & 58.2 & \textcolor{blue}{73.6} & 64.4 & \underline{69.8} & \textbf{71.4} & \underline{58.8} & 57 & \textbf{59.2} \\
        & Mistral-7B Instruct-v0.2   & 59.4 & 55.4 & \underline{65.6} & 64.4 & \textbf{70.0} & 55.2 & \textbf{58.6} & \underline{58.4} \\
        & Mixtral-8x7B Instruct-v0.1 & 55.6 & 68.4 & 64.4 & \textbf{70.2} & \underline{68.8} & \underline{62} & 61.2 & \textbf{63.6} \\
        & Llama 3.3 70B Instruct     & 81.0 & \textcolor{blue}{82.4} & 68.4 & \underline{74.2} & \textbf{75.8} & -- & -- & -- \\
        \midrule
         \multirow{5}{*}{TruthfulQA} 
        & Llama 3.1 3B Instruct      & 62.2 & \textcolor{blue}{68} & 65.2 & \underline{66.9} & \textbf{67.0} & 60.6 & \textbf{65.0} & \underline{63.8} \\
        & Llama 3.1 8B Instruct      & 63.0 & 70.4 & 67.8 & \underline{69.4} & \textbf{75.6} & \underline{59.8} & 52.8 & \textbf{68.2} \\
        & Mistral-7B Instruct-v0.2   & 70.6 & 57.2 & \underline{68.2} & 65.6 & \textbf{69.2} & \underline{57.2} & 54.4 & \textbf{59.8} \\
        & Mixtral-8x7B Instruct-v0.1 & 66.6 & 62.8 & 71.4 & \underline{76.4} & \textbf{78.0} & \textbf{62.8} & 60.4 & \underline{61.6} \\
        & Llama 3.3 70B Instruct     & 81.6 & \textcolor{blue}{83.6} & 68.4 & \textbf{80.6} & \underline{79.4} & -- & -- & -- \\
      \bottomrule
    \end{tabular}
  }
  \caption{Evaluation results on MedClaim, StrategyQA and TruthfulQA comparing Direct prompting, CoT, ArgLLM, and ART with \textit{depth=$1$} and \textit{breadth=$1$}. 
Results are shown for two evaluator settings: (i) \textit{Eval = 70B}, where judgments are made by \textit{Llama 3.3 70B Instruct}, and (ii) \textit{Eval = Self}, where the evaluator matches the main model. 
The \textit{w/o intr.} setting uses only the BT score $\theta$ by setting $\lambda=1$, while \textit{intr.} combines $\theta$ with intrinsic strength $\tau$ with $\lambda=0.5$ (Eq.~\ref{eq:bt_scaled}). See Section~\ref{subsection:strength_calibration} for details. Bold indicates the best \textit{Accuracy} score, while underlining marks the second best within the two evaluator settings. \textcolor{blue}{Blue} color text indicates cases where CoT or Direct achieves the best performance.} 
  \label{tab:results_both}
\end{table*}

\section{Results}
\label{sec:results}
We benchmark ART against Direct prompting, CoT prompting, and ArgLLM~\cite{freedman2025argumentativelargelanguagemodels} in Table~\ref{tab:results_both}, using tree depth and breadth fixed at 1 across datasets and LLMs. In this ART configuration, each root claim is evaluated through a single supporting argument and a single attacking argument competing to determine the claim’s validity.

\paragraph{Evaluator Settings:} We evaluate two evaluator settings. In the \textit{Single-LLM} setup (\textit{Eval=Self}), the same LLM performs argument generation, intrinsic strength computation, pairwise comparisons, and final judgment. This setup directly tests the model’s ability to identify relatively stronger arguments, but it is also prone to self-affirmation bias since generation and adjudication rely on the same LLM. In contrast, the \textit{Multi-LLM} setup (\textit{Eval=70B}) separates these roles: one model generates arguments, while a stronger judge LLM performs pairwise comparisons and intrinsic strength calculation for ART nodes, thereby reducing the bias present in the \textit{Single-LLM} configuration. We use \textit{Llama 3.3 70B-Instruct} model as the judge model due to its strong performance in complex problems. The \textit{Multi-LLM} configuration delivers substantially better results than \textit{Single-LLM} in our studies.

\paragraph{Bradley--Terry Calibration with Intrinsic Strengths:}
We employ the Bradley--Terry (BT) model to derive argument strengths from pairwise tournaments. In the \textit{w/o intr.} setting, $\lambda$ is set to $1$ and only the BT-derived score $\theta$ is used (Eq.~\ref{eq:bt_scaled}) to achieve the final score $s$ (Eq. \ref{eq:final_claim}), while in the \textit{intr.} setting, $\lambda$ is set to $0.5$ so that the BT score is further blended with the intrinsic strength $\tau$, yielding the adjusted value $\tau'$ to calculate $s$. We believe intrinsic strength helps regularize the BT estimates by encoding prior plausibility/quality, reducing variance when pairwise evidence is sparse or noisy, and preventing brittle swings from a few contested comparisons. On average, we achieve a $3\%$ relative accuracy improvement in the \textit{intr.} setting over the \textit{w/o intr.} setting having aggregated over both the evaluator settings. \textbf{Note.} If not explicitly specified, ART should be considered with the \textit{intr.} and \textit{Multi-LLM} (\textit{Eval=70B}) configuration.

\subsection{Comparison with Direct and CoT}
We observe that ART outperforms the Direct Prompting methodology overall, which is expected since direct prompting requires the LLM to evaluate the claim in a single step without any structured decomposition of arguments. In contrast, ART explicitly organizes the reasoning process into supporting and attacking arguments, enabling more reliable adjudication of the root claim. 
The main gap appears on larger models: CoT often performs better because they exploit rich step-by-step reasoning, while ART’s multiple argument branches can add noise and slightly reduce accuracy. Overall, while ART not only achieves at par performance over CoT, its key advantages lie beyond accuracy. ART produces faithful, contestable explanations structured as arguments, from which final decisions are deterministically derived. By contrast, CoT traces offer post-hoc rationalizations that may not reflect the model’s actual stochastic reasoning process~\cite{turpin2023languagemodelsdontsay}. ART therefore provides greater transparency and verifiability, as each decision can be directly traced to its underlying arguments and semantics.

\subsection{ART vs ArgLLM}
When $\lambda=0$, the update reduces to the ArgLLM baseline: no BT contribution is applied and each weight stays at its pre-BT value (i.e., $\tau_i'=\tau_i$). This recovers the original ArgLLM setting and serves as a clean ablation point for comparisons. We observe that in most cases (see Table~\ref{tab:results_both} and ~\ref{tab:argllm_art_variants}), ART outperforms ArgLLM. This can be attributed to how strength is computed for each node: ArgLLM estimates strengths in isolation, without considering sibling arguments, which limits the robustness of its scoring. In contrast, ART employs a pairwise tournament in which arguments of opposite polarity directly compete, with strengths estimated through the Bradley--Terry (BT) model. This ensures that scores are both relative and comparable within the same latent space, producing more reliable outcomes. Furthermore, ART’s calibration step grounds final decisions in argumentation semantics, enhancing consistency and interpretability compared to ArgLLM’s uncalibrated node-level scores. ArgLLM is a special case of ART obtained by setting $\lambda=0$: the update ignores pairwise outcomes and applies no score calibration, relying solely on the intrinsic strengths of the arguments.

Figure~\ref{fig:qual_ex1} shows a qualitative example where two arguments that begin with equal intrinsic weights can calibrate to $0.7$ (support) versus $0.4$ (attack), shifting the root claim’s probability from $0.5$ to $0.8$ and flipping a mistaken rejection into a correct acceptance. In practice, this pairwise-and-calibrate step addresses \texttt{ArgLLM}’s main failure mode—uncalibrated, non-comparable scores—while preserving transparent, contestable reasoning traces. ART moves all arguments into a shared latent scale and corrects over/under-weighting.

\begin{figure}
\centering
\includegraphics[width=\linewidth]{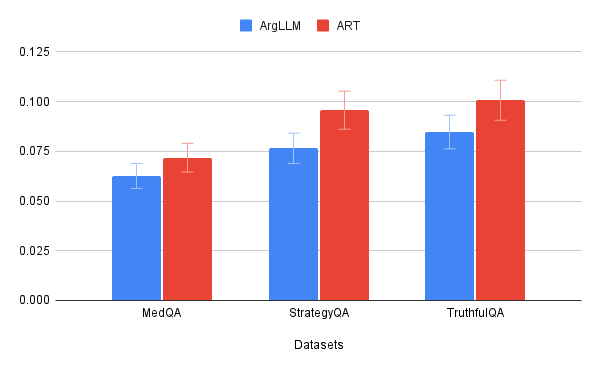}
\caption{Variance of strengths distribution under ArgLLM ($\lambda=0$) vs.\ ART ($\lambda=0.5$). of the root claim nodes. ART yields higher variance—scores pushed away from $0.5$—indicating more decisive, calibrated strengths.}
\label{fig:variance_comparison}
\end{figure}

\paragraph{Bias Variance Tradeoff:}

We observe that BT--calibrated strengths lead to higher variance in the final claim prediction probabilities as compared to ArgLLM strengths (see Figure~\ref{fig:variance_comparison}). ART increases variance by decompressing intrinsic scores and incorporating BT comparison evidence, but improves ranking calibration. ART helps with bias reduction via pairwise comparisons as opposed to isolated strength calculations.

\subsection{Tree Variations}
We analyze performance across different tree sizes in terms of depth and breadth in Table~\ref{tab:argllm_art_variants}. 
To balance accuracy with latency, we restrict our exploration to a maximum depth and breadth of two. 
Our results show that increasing the breadth of the tree improves the performance of models with different size across benchmarks. Overall, breadth makes more difference than the height as each argument in the tree is now validated with more support and attack child nodes giving more context to the evaluator model $J$. ART outperforms ArgLLM in this setup as well.

Having shown that widening a single tree is effective, we next examined whether an ensemble of separate, simpler trees could be a competitive alternative. We evaluate this multi-tree ensemble—along with its computational trade-offs—in Appendix~\ref{sec:appendix_multi_tree}. Empirically, while greater breadth for pairwise comparisons remains a strong general-purpose strategy, a minimalist configuration (\(D{=}1, B{=}1\)) coupled with a strong evaluator (e.g., Llama~3.3~70B) can be unexpectedly potent, particularly under compute constraints.

\begin{table}[t]
  \centering
  \resizebox{\columnwidth}{!}{%
    \begin{tabular}{llcccc}
      \toprule
      \textbf{Dataset} & \textbf{Model} & 
      \makecell{\textbf{ArgLLM}\\($D{=}1,B{=}2$)} & 
      \makecell{\textbf{ArgLLM}\\($D{=}2,B{=}1$)} & 
      \makecell{\textbf{ART}\\($D{=}1,B{=}2$)} & 
      \makecell{\textbf{ART}\\($D{=}2,B{=}1$)} \\
      \midrule
      \multirow{5}{*}{MedClaim} 
        & Llama 3.1 3B Instruct      & \underline{57.9} & 56.8 & \textbf{60.8} & 53.3 \\
        & Llama 3.1 8B Instruct      & \underline{62.5} & 61.9 & \textbf{62.9} & 53.7 \\
        & Mistral-7B Instruct-v0.2   & \underline{55.6} & 52.8 & \textbf{58.3} & 50.1 \\
        & Mixtral-8x7B Instruct-v0.1 & \textbf{66.1} & 62.7 & \underline{65.7} & 55.8 \\
        & Llama 3.3 70B Instruct     & \underline{67.1} & \underline{67.1} & \textbf{67.4} & 56.0 \\
      \midrule
      \multirow{5}{*}{StrategyQA} 
        & Llama 3.1 3B Instruct      & \underline{63.2} & 61.8 & \textbf{64.4} & 57.6 \\
        & Llama 3.1 8B Instruct      & \underline{64.8} & 66.2 & \textbf{72.2} & 53.4 \\
        & Mistral-7B Instruct-v0.2   & 66.8 & \underline{67.0} & \textbf{70.2} & 59.2 \\
        & Mixtral-8x7B Instruct-v0.1 & 62.6 & \textbf{67.0} & \underline{65.2} & 54.4 \\
        & Llama 3.3 70B Instruct     & \underline{69.4} & 68.2 & \textbf{75.4} & 58.2 \\
        \midrule
       \multirow{5}{*}{TruthfulQA} 
        & Llama 3.1 3B Instruct      & \underline{68.0} & 67.4 & \textbf{69.0} & 60.8 \\
        & Llama 3.1 8B Instruct      & \underline{69.6} & 67.0 & \textbf{73.0} & 55.0 \\
        & Mistral-7B Instruct-v0.2   & \underline{69.6} & 68.6 & \textbf{72.2} & 51.4 \\
        & Mixtral-8x7B Instruct-v0.1 & 72.4 & \underline{76.0} & \textbf{79.4}& 62.2 \\
        & Llama 3.3 70B Instruct     & 71.0 & \underline{74.8} & \textbf{80.2} & 63.0 \\ 
      \bottomrule
    \end{tabular}
  }
  \caption{\small Accuracy of ArgLLM and ART on datasets with different tree settings (\textit{Eval=70B}, \textit{intr.}). Best scores are in bold and second best are underlined. \textbf{D} = depth, \textbf{B} = breadth.}
  \label{tab:argllm_art_variants}
\end{table}

\section{Conclusion and Future Work}
\label{sec:conclusion}
We introduce Adaptive Reasoning Trees (ART) as a principled framework for transparent and contestable claim verification. ART systematically decomposes a claim into a hierarchy of supporting and attacking arguments, forming a tree structure where each argument may itself branch into further pro and con arguments. These arguments engage in pairwise tournaments, with their relative strengths estimated using the Bradley–Terry model, ensuring that scores are calibrated within a shared latent space. Through a bottom-up aggregation process, ART computes the overall strength of the root claim to decide whether it should be accepted or rejected. ART is robust across benchmarks and configurations (varying tree breadth and height), producing faithful, interpretable traces and consistently outperforming Direct Prompting, Chain-of-Thought, and ArgLLM in accuracy and robustness. It delivers a transparent, contestable verification framework in which final decisions arise from the balanced aggregation of all arguments.

Future work can explore grounding-based argument generation using knowledge bases like \textit{Wikipedia} for evidence-backed support and attack generation. Additionally, tournaments could weigh comparisons by evidence quality, recency, and source reliability for provenance-aware, auditable decisions. Moreover, while Bradley--Terry offers a solid calibration method, sophisticated methods leveraging neural networks that can learn the nonlinear cues and interactions could be a potential avenue. Lastly, ART could be augmented with a reinforcement learning objective that rewards each component for the correct collective decision.
\section{Limitations}\label{sec:limitations}
While ART offers a transparent, contestable and traceable framework for claim verification, it has some limitations. As compared to Direct and CoT prompting methods that make a single LLM call, ART makes multiple LLM calls - for support/attack generation, intrinsic scoring, and pairwise comparisons which could lead to higher time complexity. However, with multiple parallel asynchronous calls, this time could be brought down significantly. Additionally, LLM calls could be reduced by optimizing LLMs to generate both support and attacks at the same time. While ART offers good results on the three datasets used in this paper, it may still face challenges with claims that are more complex and uncertain and require extensive world knowledge, and domain expertise. Such cases can amplify error propagation across the tree and stress the judge’s calibration. LLM judges can be prompt sensitive or biased towards a certain set of arguments, which could reduce the reliability on fair judgment and make the overall process less trustworthy. Despite these caveats, ART lays the groundwork for claim verification—fostering explainable, auditable, and contestable reasoning.

\bibliography{custom}

\appendix

\section*{Appendix}

\section{ART Algorithm}
ART is leveraged for objectively calibrating the persuasive weights of arguments within a hierarchical tree. ART identifies points of contention where a claim is defended by "supporters" and challenged by "attackers." It then leverages a large language model (LLM) as an impartial judge to conduct pairwise comparisons between every competing supporter and attacker. By aggregating the results of these head-to-head contests, the algorithm uses the Bradley-Terry statistical model to compute a "strength score" for each argument based on its performance. This score is then used to intelligently update the argument's initial weight, ensuring the final structure reflects the relative quality of its components as determined by the LLM judge.

\section{Bradley--Terry}


\noindent\textbf{MM view and notation (bipartite A--B).}
Let $A$ be supporters, $B$ attackers, and $E:=A\times B$ the set of cross-pairs.
For $(a,b)\in E$, let $\tau_{ab}$ be wins of $a$ over $b$, $n_{ab}=\tau_{ab}+\tau_{ba}$, and
$S^{(t)}_{ab}=\theta^{(t)}_a+\theta^{(t)}_b$ (pair-sum at iterate $t$).
Define
\[
\begin{aligned}
\text{wins}_a &= \sum_{b\in B}\tau_{ab}, \qquad
\text{wins}_b &= \sum_{a\in A}\tau_{ba}, \\
c_a^{(t)} &= \sum_{b\in B}\frac{n_{ab}}{S^{(t)}_{ab}}, \qquad
c_b^{(t)} &= \sum_{a\in A}\frac{n_{ab}}{S^{(t)}_{ab}}.
\end{aligned}
\]

Linearizing $-\log(\theta_a+\theta_b)$ at $S^{(t)}_{ab}$ yields a separable lower bound,
maximized at $\tilde\theta_u=\text{wins}_u/c^{(t)}_u$ for $u\in A\cup B$; normalizing to
$\sum_{u}\theta^{(t+1)}_u=1$ recovers Eq.~\eqref{eq:bt_update} with $\varepsilon=0$,
restricted to cross-pairs.

\begin{proposition}[Scale invariance under bipartite comparisons]
For any $c>0$, replacing all cross-pair counts by $c\,\tau_{ab}$ (hence $n_{ab}\mapsto c\,n_{ab}$)
leaves the mapping $\theta^{(t)}\mapsto\theta^{(t+1)}$ in Eq.~\eqref{eq:bt_update} unchanged after normalization;
thus the fixed point $\hat\theta$ (up to scale) is unchanged.
\end{proposition}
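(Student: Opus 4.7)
The plan is to exploit the homogeneity of the unnormalized BT update with respect to the cross-pair counts, then verify that the subsequent normalization step preserves this invariance. First, I would fix $\varepsilon=0$ (the regime claimed by the proposition) and write the pre-normalization update from Eq.~\eqref{eq:bt_update} as $\tilde\theta_a^{(t+1)} = \text{wins}_a / c_a^{(t)}$ for $a\in A$, and analogously for $b\in B$, where $\text{wins}_a=\sum_{b\in B}\tau_{ab}$, $\text{wins}_b=\sum_{a\in A}\tau_{ba}$, and $c_u^{(t)}=\sum n_{\cdot\cdot}/S^{(t)}_{\cdot\cdot}$ as defined in the notation block. I note that $S^{(t)}_{ab}=\theta_a^{(t)}+\theta_b^{(t)}$ depends on the current iterate but \emph{not} on the counts.

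Second, I would substitute $\tau_{ab}\mapsto c\tau_{ab}$ (hence $n_{ab}\mapsto c\,n_{ab}$) and observe that both $\text{wins}_u$ and $c_u^{(t)}$ scale by the common factor $c$. The ratio $\tilde\theta_u^{(t+1)}=\text{wins}_u/c_u^{(t)}$ is therefore pointwise unchanged for every $u\in A\cup B$. Because the subsequent projection onto the simplex $\sum_u \theta_u^{(t+1)}=1$ is a homogeneous-degree-$0$ operation, it commutes with the cancellation of $c$, so the normalized iterate is identical as well. This establishes that the mapping $T:\theta^{(t)}\mapsto\theta^{(t+1)}$ is literally the same function under the count rescaling.

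Third, to transfer invariance to fixed points, I would argue that since $T$ is unchanged, its fixed-point set is unchanged. Combined with the standard BT scale freedom (any $\theta$ and $\alpha\theta$ with $\alpha>0$ induce the same probabilities $P(a\succ b)$ in Eq.~\eqref{eq:bt_prob}), the normalization on the simplex selects the same representative $\hat\theta$ in both the original and rescaled problems, which is precisely the ``up to scale'' conclusion. I would also remark that the MM ascent property referenced before the proposition is preserved, since multiplying the log-likelihood by $c$ changes neither its argmax nor the surrogate's tangency conditions at $S^{(t)}_{ab}$.

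The only substantive care-points — not really obstacles — are (i) making the $\varepsilon=0$ hypothesis explicit, since a nonzero additive stabilizer in the denominator would break exact homogeneity and only yield invariance in the $\varepsilon\to 0$ limit; and (ii) handling degenerate iterates where some $S_{ab}^{(t)}=0$, which I would rule out by the standing positivity assumption $\theta_i>0$ from the model specification. Once these are noted, the argument reduces to the two-line cancellation above.
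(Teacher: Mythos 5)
Your proposal is correct and follows essentially the same route as the paper's proof: both observe that $\text{wins}_u$ and every term of $c_u^{(t)}$ scale by $c$ so the ratio $\text{wins}_u/c_u^{(t)}$ is pointwise invariant, that simplex normalization is degree-$0$ homogeneous, and that equivalently the log-likelihood is multiplied by $c$ and so has the same maximizer up to scale. Your explicit caveat that exact invariance requires $\varepsilon=0$ (the stabilizer would break homogeneity) is a worthwhile precision the paper leaves implicit.
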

\begin{proof}
Each $\text{wins}_u$ and each term of $c^{(t)}_u$ scales by $c$, so $\text{wins}_u/c^{(t)}_u$ is invariant;
normalization then gives the same $\theta^{(t+1)}$.
Equivalently, the bipartite BT log-likelihood
\begin{align}
\small
\mathcal{L}(\theta)
&= \sum_{(a,b)\in E}\!\Bigl[\tau_{ab}\log\theta_a+\tau_{ba}\log\theta_b \Bigr] \notag\\
&\quad - \sum_{(a,b)\in E}\! n_{ab}\log\!\bigl(\theta_a+\theta_b\bigr).
\end{align}
is multiplied by $c$ and has the same maximizer up to normalization.
\end{proof}

\begin{proposition}[MM minorization and monotone ascent for $\varepsilon=0$ (bipartite)]
\label{prop:mm_bt}
With the notation above, the update $\tilde\theta_u=\text{wins}_u/c^{(t)}_u$ for $u\in A\cup B$,
followed by normalization, is a Majorize–Minimize step for $\mathcal{L}$ and satisfies
$\mathcal{L}(\theta^{(t+1)})\ge \mathcal{L}(\theta^{(t)})$, with equality iff $\theta^{(t+1)}=\theta^{(t)}$.
\end{proposition}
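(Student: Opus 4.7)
The plan is to establish all three assertions through the standard MM construction: exhibit a minorizer $Q(\theta\mid\theta^{(t)})$ that touches $\mathcal{L}$ at $\theta^{(t)}$ and lies below it elsewhere, show that the stated update is its global maximizer, and then obtain monotone ascent from the MM sandwich $\mathcal{L}(\theta^{(t+1)})\ge Q(\theta^{(t+1)}\mid\theta^{(t)})\ge Q(\theta^{(t)}\mid\theta^{(t)})=\mathcal{L}(\theta^{(t)})$. The equality clause should fall out by tracking when each inequality in the sandwich is tight.

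First I would construct the minorizer. The $\tau_{ab}\log\theta_a+\tau_{ba}\log\theta_b$ terms in $\mathcal{L}$ are already concave and node-separable, so they are kept as is. The coupling terms $-n_{ab}\log(\theta_a+\theta_b)$ are handled by convexity of $-\log$: the tangent-line inequality at $S_{ab}^{(t)}$ yields $-\log(\theta_a+\theta_b)\ge -\log S_{ab}^{(t)}-\bigl((\theta_a+\theta_b)-S_{ab}^{(t)}\bigr)/S_{ab}^{(t)}$, tight at $\theta_a+\theta_b=S_{ab}^{(t)}$. Multiplying by $n_{ab}\ge 0$, summing over the bipartite edge set $E=A\times B$, and regrouping the linear coefficients by node gives
\begin{equation*}
Q(\theta\mid\theta^{(t)}) \;=\; \sum_{u\in A\cup B}\Bigl(\text{wins}_u\,\log\theta_u \;-\; c_u^{(t)}\,\theta_u\Bigr) \;+\; K_t,
\end{equation*}
where $K_t$ collects terms independent of $\theta$. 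By construction $Q(\theta\mid\theta^{(t)})\le \mathcal{L}(\theta)$ for all $\theta>0$, with equality at $\theta=\theta^{(t)}$.

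Second, $Q$ is separable and strictly concave on the positive orthant, so node-wise first-order conditions immediately yield the unconstrained maximizer $\tilde\theta_u=\text{wins}_u/c_u^{(t)}$. Because $\mathcal{L}$ is scale-invariant (established in the preceding proposition), normalizing $\tilde\theta$ to the simplex to obtain $\theta^{(t+1)}$ does not change the likelihood value: $\mathcal{L}(\theta^{(t+1)})=\mathcal{L}(\tilde\theta)\ge Q(\tilde\theta\mid\theta^{(t)})\ge Q(\theta^{(t)}\mid\theta^{(t)})=\mathcal{L}(\theta^{(t)})$. For the equality characterization, tightness throughout the sandwich forces (i) $\theta_a+\theta_b=S_{ab}^{(t)}$ on every cross-pair with $n_{ab}>0$ (surrogate-touching condition) and (ii) $\theta^{(t+1)}=\tilde\theta$ after normalization; together these pin $\theta^{(t+1)}=\theta^{(t)}$ on the simplex.

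The main obstacle I expect is the bookkeeping for the minorizer: verifying that $\sum_{(a,b)\in E} n_{ab}(\theta_a+\theta_b)/S_{ab}^{(t)}$ reorganizes cleanly into $\sum_{a\in A} c_a^{(t)}\theta_a+\sum_{b\in B} c_b^{(t)}\theta_b$ so that $Q$ decouples precisely along the node partition $A\cup B$. A secondary care point is the equality direction: I must check that the active cross-pairs (those with $n_{ab}>0$) tie the two bipartite sides together tightly enough that the surrogate-touching condition resolves the gauge freedom up to the shared normalization, rather than leaving a nontrivial null direction along which $\theta^{(t+1)}$ could differ from $\theta^{(t)}$ without changing $\mathcal{L}$.
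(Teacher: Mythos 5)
Your proposal is correct and follows essentially the same route as the paper's proof: tangent-line minorization of $-\log(\theta_a+\theta_b)$ at $S^{(t)}_{ab}$, regrouping into a separable strictly concave surrogate $Q$, first-order conditions yielding $\tilde\theta_u=\text{wins}_u/c^{(t)}_u$, and the MM sandwich combined with scale-invariance of $\mathcal{L}$ to absorb the normalization. If anything, your chain $\mathcal{L}(\theta^{(t+1)})=\mathcal{L}(\tilde\theta)\ge Q(\tilde\theta\mid\theta^{(t)})\ge Q(\theta^{(t)}\mid\theta^{(t)})$ is slightly tidier than the paper's, since it applies the ascent inequality at the unnormalized maximizer $\tilde\theta$ where it is guaranteed, rather than at $\theta^{(t+1)}$.
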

\begin{proof}
{
\setlength{\jot}{3pt}
By convexity of $-\log x$, with $x=\theta_a+\theta_b$ and $x_0=S^{(t)}_{ab}$,
\(
-\log(\theta_a+\theta_b)\ge -\log S^{(t)}_{ab}-\tfrac{\theta_a+\theta_b-S^{(t)}_{ab}}{S^{(t)}_{ab}}.
\)
Multiplying by $n_{ab}$ and summing over $E$ yields
{\small                     
\begin{align}
\mathcal{L}(\theta)
&\ge \sum_{(a,b)\in E}\!\bigl[\tau_{ab}\log\theta_a+\tau_{ba}\log\theta_b\bigr] \notag\\
&\quad - \sum_{(a,b)\in E}\! n_{ab}\,\log S^{(t)}_{ab}
      - \sum_{(a,b)\in E}\! \tfrac{n_{ab}}{S^{(t)}_{ab}}\bigl(\theta_a+\theta_b - S^{(t)}_{ab}\bigr) \notag\\
&= \sum_{u\in A\cup B}\!\text{wins}_u\log\theta_u
   - \sum_{u\in A\cup B}\!\theta_u\, c^{(t)}_u
   + \text{const}\bigl(\theta^{(t)}\bigr) \notag\\
&=: Q(\theta\,|\,\theta^{(t)}) + \text{const}.
\end{align}
}
}%
$Q$ is tight at $\theta^{(t)}$ and strictly concave on $\{\theta>0\}$; the first-order
conditions give $\text{wins}_u/\theta_u - c^{(t)}_u=0$, i.e., $\tilde\theta_u=\text{wins}_u/c^{(t)}_u$.
Since $\mathcal{L}(c\theta)=\mathcal{L}(\theta)$ for any $c>0$, normalizing $\tilde\theta$ to
$\sum_u\theta^{(t+1)}_u=1$ does not change $\mathcal{L}$; hence
{
\setlength{\jot}{1pt}
\begin{align}
\mathcal{L}(\theta^{(t+1)})
&\ge Q(\theta^{(t+1)}\,|\,\theta^{(t)}) + \text{const} \notag\\
&\ge Q(\theta^{(t)}\,|\,\theta^{(t)}) + \text{const}
= \mathcal{L}(\theta^{(t)}),
\end{align}
}%
with equality iff $\theta^{(t+1)}=\theta^{(t)}$.
\end{proof}

\begin{proposition}[DF-QuAD combination via signed product gap]
\label{prop:dfquad_proof}
Let $p$ be a node with supporters $\mathrm{S}(p)$ and attackers $\mathrm{A}(p)$.
Define
\begin{equation}
\alpha_p \;=\;
\prod_{a \in \mathrm{S}(p)} (1 - s_a)
\;-\;
\prod_{b \in \mathrm{A}(p)} (1 - s_b),
\label{eq:alpha}
\end{equation}
where $s_a,s_b\in[0,1]$ are propagated strengths.
Let $\tau_p'\in[0,1]$ be the BT-calibrated intrinsic weight.
Then the parent score
\begin{equation}
s_p \;=\;
\begin{cases}
\tau_p' \;-\; \alpha_p \,\tau_p', & \alpha_p > 0, \\[2pt]
\tau_p' \;-\; \alpha_p \,(1 - \tau_p'), & \alpha_p \le 0,
\end{cases}
\label{eq:linfluence}
\end{equation}
is exactly the DF\mbox{-}QuAD update for $p$ and satisfies $s_p\in[0,1]$, is continuous in $\alpha_p$, and is nonincreasing in $\alpha_p$.
\end{proposition}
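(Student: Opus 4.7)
The plan is to identify the two products in $\alpha_p$ with the standard DF\mbox{-}QuAD aggregates, so that the piecewise formula for $s_p$ collapses to the canonical quantitative\mbox{-}semantics update. First I would define the support and attack aggregates
\[ v_s \;=\; 1 - \prod_{a\in\mathrm{S}(p)}(1-s_a), \qquad v_a \;=\; 1 - \prod_{b\in\mathrm{A}(p)}(1-s_b), \]
so that $\alpha_p = v_a - v_s$. Each aggregate lies in $[0,1]$ because every factor $(1-s_a),(1-s_b)\in[0,1]$, hence $\alpha_p\in[-1,1]$; this is a routine bound I would record up front since it underpins all three qualitative claims.

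Next I would match each branch of $s_p$ to the corresponding clause of the DF\mbox{-}QuAD update of Rago et al. When supporters dominate (the case $\alpha_p\le 0$, i.e., $v_s\ge v_a$), the DF\mbox{-}QuAD rule is $s_p = \tau_p' + (1-\tau_p')(v_s - v_a)$; substituting $v_s-v_a=-\alpha_p$ yields $\tau_p' - \alpha_p(1-\tau_p')$, which is exactly the lower branch of Eq.~\eqref{eq:linfluence}. When attackers dominate ($\alpha_p>0$), the rule is $s_p = \tau_p' - \tau_p'(v_a - v_s) = \tau_p'(1-\alpha_p) = \tau_p' - \alpha_p\tau_p'$, matching the upper branch. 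This dictionary is the heart of the proposition; once it is in place the rest is arithmetic.

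Then I would derive the three qualitative properties case\mbox{-}wise. For boundedness: on $\alpha_p>0$, $s_p=\tau_p'(1-\alpha_p)\in[0,\tau_p']$; on $\alpha_p\le 0$, $s_p=\tau_p'+|\alpha_p|(1-\tau_p')\in[\tau_p',1]$; together $s_p\in[0,1]$. For continuity at the seam: both branches evaluate to $\tau_p'$ when $\alpha_p=0$, so the piecewise function is continuous on $[-1,1]$. For monotonicity: differentiating in $\alpha_p$ gives slope $-\tau_p'\le 0$ on $\alpha_p>0$ and $-(1-\tau_p')\le 0$ on $\alpha_p<0$; the two one\mbox{-}sided slopes at $0$ are each nonpositive, so $s_p$ is nonincreasing globally (and strictly decreasing unless $\tau_p'\in\{0,1\}$ on the relevant branch).

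The main obstacle is presentational rather than technical: the paper writes the update in the signed\mbox{-}gap form $\alpha_p$ while the standard DF\mbox{-}QuAD literature uses the two aggregates $v_s,v_a$ separately, so I have to pin down the sign convention carefully before matching the branches. The only place a subtlety could sneak in is the boundary $\alpha_p=0$, where both clauses of Eq.~\eqref{eq:linfluence} apply; I would handle it explicitly to show the definition is unambiguous and the resulting function is $C^0$ (though not $C^1$ unless $\tau_p'=\tfrac12$). After that, every remaining claim reduces to a one\mbox{-}line case analysis in $\alpha_p$ and $\tau_p'$.
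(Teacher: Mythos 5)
Your proposal is correct and follows essentially the same route as the paper's proof: introduce the noisy-OR aggregates $v_s,v_a$, observe that $\alpha_p$ is the negated gap $v_s-v_a$, match the two branches to the DF-QuAD combination function, and then verify range, continuity at $\alpha_p=0$, and the nonpositive one-sided slopes. Your added remarks on strict monotonicity and the lack of $C^1$ smoothness unless $\tau_p'=\tfrac12$ are correct refinements beyond what the paper records.
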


\begin{proof}
Define the DF\mbox{-}QuAD noisy-OR aggregates for support and attack:
\begin{equation*}
v_s \;=\; 1 - \prod_{a\in \mathrm{S}(p)} (1-s_a), 
\qquad
v_a \;=\; 1 - \prod_{b\in \mathrm{A}(p)} (1-s_b)
\end{equation*}
Let the signed net-support gap be $\delta  := v_s - v_a \in [-1,1]$.
By~\eqref{eq:alpha},
\begin{align}
\delta &:= v_s - v_a \nonumber\\
&= \left(1-\prod_{a\in \mathrm{S}(p)}(1-s_a)\right)
 - \left(1-\prod_{b\in \mathrm{A}(p)}(1-s_b)\right) \nonumber\\
&= \prod_{b\in \mathrm{A}(p)}(1-s_b)
 - \prod_{a\in \mathrm{S}(p)}(1-s_a)
 \;=\; -\,\alpha_p. \nonumber
\end{align}
DF\mbox{-}QuAD’s combination function $C$ maps $(\tau_p',v_a,v_s)$ to
\begin{equation*}
C(\tau_p',v_a,v_s) \;=\;
\begin{cases}
\tau_p' + (1-\tau_p')\,\delta, & \delta \ge 0,\\[2pt]
\tau_p' + \tau_p'\,\delta,     & \delta < 0
\end{cases}
\end{equation*}
Substituting $\delta=-\alpha_p$ yields exactly~\eqref{eq:linfluence}:
if $\alpha_p\le 0$ (i.e., $\delta\ge 0$), then $s_p=\tau_p'-(1-\tau_p')\alpha_p$;
if $\alpha_p>0$ (i.e., $\delta<0$), then $s_p=\tau_p'-\tau_p'\alpha_p$.

\emph{Range.} Since both products in~\eqref{eq:alpha} lie in $[0,1]$, we have $\alpha_p\in[-1,1]$.
For $\alpha_p>0$, $s_p=\tau_p'(1-\alpha_p)\in[0,\tau_p']\subseteq[0,1]$.
For $\alpha_p\le 0$, $s_p=\tau_p'+(1-\tau_p')(-\alpha_p)\in[\tau_p',1]\subseteq[0,1]$.

\emph{Continuity.} At $\alpha_p=0$ both branches give $s_p=\tau_p'$, hence $s_p$ is continuous in $\alpha_p$.

\emph{Monotonicity.} For $\alpha_p>0$, $\frac{\partial s_p}{\partial \alpha_p}=-\tau_p'\le 0$; 
for $\alpha_p\le 0$, $\frac{\partial s_p}{\partial \alpha_p}=-(1-\tau_p')\le 0$.
Thus $s_p$ is nonincreasing in $\alpha_p$, so larger attack dominance (larger $\alpha_p$) never increases $s_p$.
\end{proof}

\section{Examples}
In this section, we provide some examples where pairwise tournaments and calibrations helped with correct predictions. Here, ART leverages \textit{Llama 3.3 70B Instruct} model is used with the \textit{intr.} setting. Increasing number of attack and support arguments in the tournament leads to more stable results with lower variance at the cost of higher latency. In Figure~\ref{fig:examples} we provide more examples where ART guides the final decision in the right direction using two support and attack arguments. 

\begin{figure*}
\centering
\includegraphics[width=\linewidth]{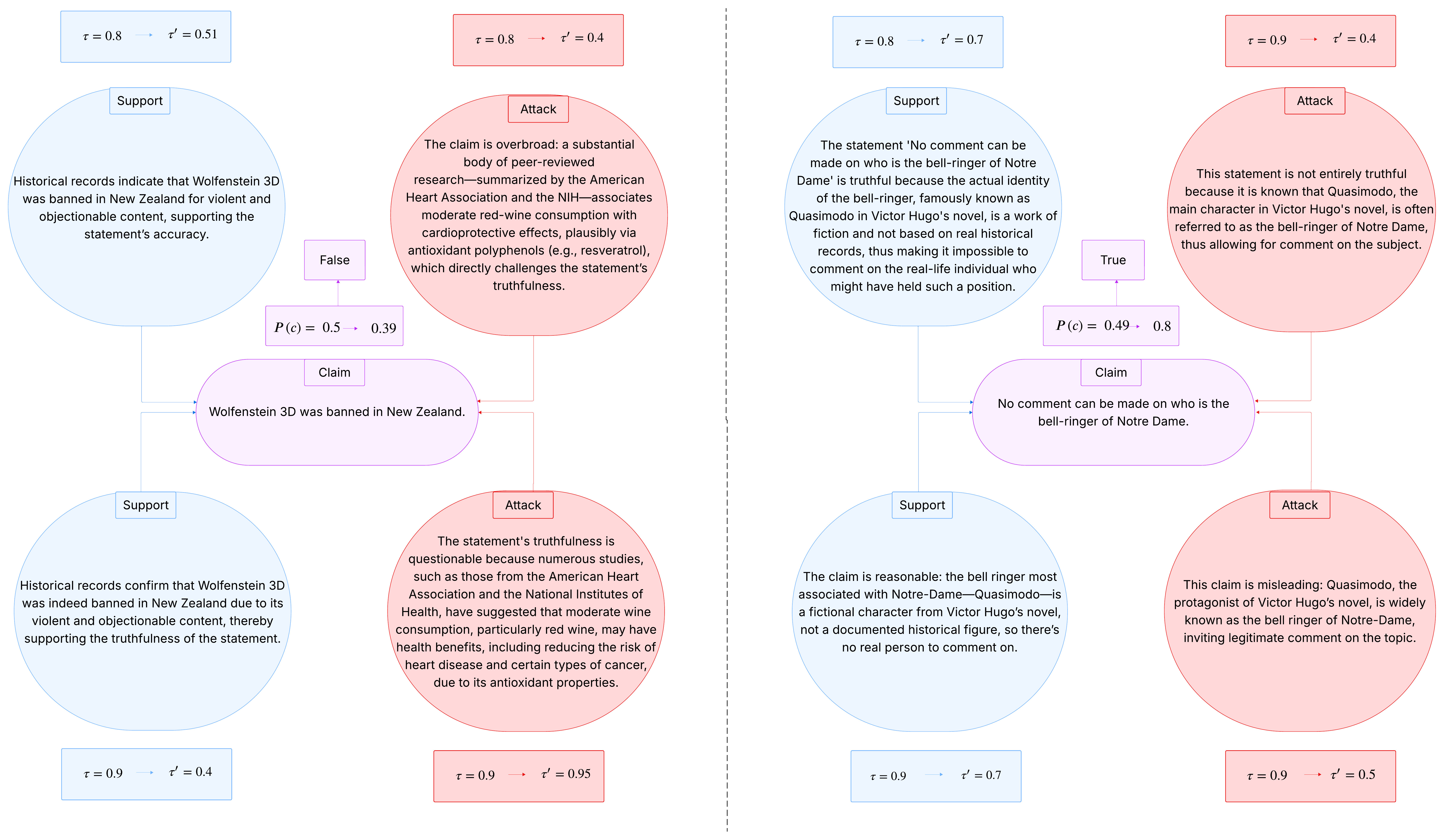}
\caption{Examples where ART improves the prediction results due to calibration that happens as a result of pairwise tournaments. Increasing the number of support and attack arguments in the tournament provides greater stability to the estimated strengths.}
\label{fig:examples}
\end{figure*}


\section{Multi-Tree Ensemble}\label{sec:appendix_multi_tree}

In our exploration of tree-based reasoning structures, we hypothesized that an ensemble method might yield more robust and accurate results. To this end, we designed an experiment comparing three distinct configurations: a simple single-tree $(D=1,B=1)$, a single-tree with a pairwise tournament $(D=1,B=2)$, and a multi-tree ensemble. The ensemble method involved generating two independent $(D=1,B=1)$ Argumentative Reasoning Trees (ART) and averaging their final scores. This allowed us to test whether a direct, internal comparison of arguments is more effective than aggregating scores from isolated reasoning paths.

The results of this experiment, presented in Table~\ref{tab:multi_tree_ensemble}, reveal a surprisingly nuanced outcome where no single configuration is universally superior. The single-tree $(D=1,B=2)$ approach, which facilitates a direct pairwise tournament, proves to be the most robust baseline, achieving the highest accuracy in the majority of test cases across all datasets. This confirms that forcing a direct comparison between arguments is a powerful and generally effective strategy.

However, a deeper analysis shows that this advantage is not absolute and is~\textbf{highly dependent on the capability of the evaluator model}. This dependency creates clear and interesting patterns in the results. For instance, when a state-of-the-art model like~\textbf{Llama 3.3 70B} acts as the evaluator, simpler structures often become the top performers. We see the multi-tree ensemble winning with the Llama 3.3 70B generator on MedClaim, and the basic ($D=1,B=1$) tree winning with the same generator on StrategyQA. This suggests that a powerful evaluator may not need a complex reasoning structure to accurately assess a claim's validity.

\begin{table*}[h!]
 \centering
 \resizebox{\textwidth}{!}{
   \begin{tabular}{lllccc}
     \toprule
     \textbf{Dataset} & \textbf{Model} & \textbf{Evaluator} & 
     \makecell{\textbf{ART}\\\textbf{Single-Tree} ($D{=}1,B{=}1$)} &
     \makecell{\textbf{ART}\\\textbf{Single-Tree} ($D{=}1,B{=}2$)} & 
     \makecell{\textbf{ART}\\\textbf{Multi-Tree Ensemble($D{=}1,B{=}1$)}} \\
     \midrule
     \multirow{9}{*}{MedClaim} 
       & Llama 3.1 3B Instruct      & Llama 3.1 3B Instruct   & 53.68 & \textbf{56.42} & 53.68 \\
       & Llama 3.1 3B Instruct      & Llama 3.3 70B Instruct  & \textbf{65.05} & 60.84 & 61.2 \\
       & Llama 3.1 8B Instruct      & Llama 3.1 8B Instruct   & 56 & \textbf{56.21} & 55.37 \\
       & Llama 3.1 8B Instruct      & Llama 3.3 70B Instruct  & 63.57 & 62.97 & \textbf{63.67} \\
       & Mistral-7B Instruct-v0.2   & Mistral-7B Instruct-v0.2    & 52.84 & \textbf{54.31} & 52.58 \\
       & Mistral-7B Instruct-v0.2   & Llama 3.3 70B Instruct  & 57.68 & \textbf{58.31} & 57.89 \\
       & Mixtral-8x7B Instruct-v0.1 & Mixtral-8x7B Instruct-v0.1   & 56.21 & \textbf{61.05} & 60.63 \\
       & Mixtral-8x7B Instruct-v0.1 & Llama 3.3 70B Instruct  & 65.26 & \textbf{65.68} & 64.5 \\
       & Llama 3.3 70B Instruct     & Llama 3.3 70B Instruct  & 69.05 & 67.36 & \textbf{69.26} \\
     \midrule
     \multirow{9}{*}{StrategyQA} 
       & Llama 3.1 3B Instruct      & Llama 3.1 3B Instruct   & 57.6 & \textbf{59.4} & 59 \\
       & Llama 3.1 3B Instruct      & Llama 3.3 70B Instruct  & 62.4 & \textbf{64.4} & 61.8 \\
       & Llama 3.1 8B Instruct      & Llama 3.1 8B Instruct   & 59.2 & \textbf{64.6} & 60.2 \\
       & Llama 3.1 8B Instruct      & Llama 3.3 70B Instruct  & 71.4 & \textbf{72.2} & 70.8 \\
       & Mistral-7B Instruct-v0.2   & Mistral-7B Instruct-v0.2    & 58.4 & 61.2 & \textbf{61.6} \\
       & Mistral-7B Instruct-v0.2   & Llama 3.3 70B Instruct  & 70 & \textbf{70.2} & 68.6 \\
       & Mixtral-8x7B Instruct-v0.1 & Mixtral-8x7B Instruct-v0.1   & \textbf{63.6} & 61 & 61.2 \\
       & Mixtral-8x7B Instruct-v0.1 & Llama 3.3 70B Instruct  & 68.8 & 65.2 & \textbf{69.2} \\
       & Llama 3.3 70B Instruct     & Llama 3.3 70B Instruct  & \textbf{75.8} & 75.4 & 74.6 \\
       \midrule
      \multirow{9}{*}{TruthfulQA} 
       & Llama 3.1 3B Instruct      & Llama 3.1 3B Instruct   & 63.8 & \textbf{64.8} & 64.6 \\
       & Llama 3.1 3B Instruct      & Llama 3.3 70B Instruct  & 67 & \textbf{69} & 68.2 \\
       & Llama 3.1 8B Instruct      & Llama 3.1 8B Instruct   & \textbf{68.2} & 62 & 60 \\
       & Llama 3.1 8B Instruct      & Llama 3.3 70B Instruct  & \textbf{75.6} & 73 & 75 \\
       & Mistral-7B Instruct-v0.2   & Mistral-7B Instruct-v0.2    & 59.8 & 60 & \textbf{60.8} \\
       & Mistral-7B Instruct-v0.2   & Llama 3.3 70B Instruct  & 69.2 & \textbf{72.2} & 70.6 \\
       & Mixtral-8x7B Instruct-v0.1 & Mixtral-8x7B Instruct-v0.1   & 61.6 & \textbf{62} & 50.2 \\
       & Mixtral-8x7B Instruct-v0.1 & Llama 3.3 70B Instruct  & 78 & \textbf{79.4} & 71.4 \\
       & Llama 3.3 70B Instruct     & Llama 3.3 70B Instruct  & 79.4 & \textbf{80.2} & 80 \\
     \bottomrule
   \end{tabular}
 }
 \caption{\small Accuracy comparison between single-tree ($D{=}1, B{=}1$), single-tree ($D{=}1, B{=}2$) and multi-tree ensemble ART configurations with intrinsic strength calibration, broken down by generator and evaluator models . The results show that the preferred method is highly dependent on the evaluator's capability. Best scores in each row are in bold.}
 \label{tab:multi_tree_ensemble}
\end{table*}

This leads to our central insight: \textbf{the optimal reasoning architecture is not independent but is co-dependent on the components of the evaluation pipeline}. While increasing tree breadth for direct argument comparison remains a strong general-purpose strategy, our findings demonstrate that simpler architectures and ensemble methods can be unexpectedly potent. Their effectiveness, particularly when paired with highly capable evaluators, confirms that the interplay between the generator, the evaluator, and the reasoning structure is a critical factor in achieving maximum performance. 

Crucially, a practical consideration is the trade-off between accuracy and computational cost. We observe that in the specific scenarios where the Multi-Tree Ensemble outperforms other methods, the simpler Single-Tree $(D=1,B=1)$ configuration is often a close second-place contender. Given that the ensemble method effectively doubles the number of LLM calls and the associated overhead, the marginal performance gain frequently fails to justify the significant increase in cost and latency. This efficiency analysis provides a compelling final argument, reinforcing our conclusion that pursuing broader single-tree structures is a more promising and practical direction than adopting multi-tree ensembles.

\section{Time Complexity of ART}\label{sec:time_complexity}

\begin{proposition}[Counts on a full $2b$-ary tree of depth $d$]
Let each parent have $b$ supporters and $b$ attackers (branching factor $2b$), with root at depth $0$.
Then the total nodes $(N)$ and cross-group (support$\times$attack) pairs $(M)$ are
\begin{align*}
N &= \sum_{j=0}^{d} (2b)^j \;=\; \frac{(2b)^{d+1}-1}{\,2b-1\,},\\
M &= \sum_{j=0}^{d-1} (2b)^j\, b^2 \\
  &= b^2\,\frac{(2b)^d - 1}{2b - 1} \\
  &= \tfrac{b}{2}\,(N - 1), \quad
\end{align*}
\end{proposition}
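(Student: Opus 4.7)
The plan is to prove the two closed forms by direct level-by-level enumeration, then verify the stated algebraic identity by a short manipulation. Because the tree is full and perfectly balanced with branching factor $2b$ (namely $b$ supporters plus $b$ attackers per parent), the structure at each depth is completely determined, so the counting reduces to summing a geometric series.

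First, I would count $N$. Setting the root at depth $0$, an induction on $j$ shows that depth $j$ contains exactly $(2b)^j$ nodes: the base case is immediate, and each of the $(2b)^{j-1}$ nodes at depth $j-1$ contributes $2b$ children at depth $j$. Summing from $j=0$ to $d$ gives the geometric sum $\sum_{j=0}^{d}(2b)^j = \frac{(2b)^{d+1}-1}{2b-1}$, which is the first claim. The only care needed is the convention that a tree of depth $d$ has $d+1$ populated levels (root is depth $0$), which matches the proposition's formula.

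Next, I would count $M$. A cross-pair is an unordered pair $(a,b)$ with $a\in\mathcal{S}(p)$ and $b\in\mathcal{A}(p)$ for some common parent $p$. Each internal (non-leaf) parent at depth $j\in\{0,1,\dots,d-1\}$ contributes exactly $b\cdot b = b^2$ such cross-pairs, since supporters and attackers are disjoint and every supporter is compared to every attacker under that parent. There are $(2b)^j$ such parents at depth $j$, so summing yields $M=\sum_{j=0}^{d-1}(2b)^j b^2 = b^2\cdot \frac{(2b)^d-1}{2b-1}$. Nodes at depth $d$ are leaves and contribute nothing, which is why the upper summation index is $d-1$.

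Finally, I would verify the compact identity $M=\tfrac{b}{2}(N-1)$ by a one-line algebraic reduction: $N-1=\frac{(2b)^{d+1}-1}{2b-1}-1=\frac{2b\bigl((2b)^d-1\bigr)}{2b-1}$, so $\tfrac{b}{2}(N-1)=\frac{b^2\bigl((2b)^d-1\bigr)}{2b-1}=M$. There is no substantive obstacle here; the proof is routine combinatorics. The only points worth being careful about are (i) the depth convention, (ii) the implicit assumption $b\ge 1$ so that $2b-1\neq 0$ and the geometric-series formula is valid, and (iii) sanity checking boundary cases such as $d=0$ (giving $N=1$ and $M=0$, consistent with both forms) and $d=1, b=1$ (giving $N=3$ and $M=1$, matching $\tfrac{1}{2}(3-1)$).
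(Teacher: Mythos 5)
Your proof is correct and follows essentially the same route the paper takes: the proposition's statement already embeds the level-by-level decomposition ($(2b)^j$ nodes at depth $j$, $b^2$ cross-pairs per internal parent) summed as geometric series, and the paper supplies no further proof body, so your write-up simply makes that implicit argument explicit, including the routine algebra for $M=\tfrac{b}{2}(N-1)$. Your added care about the depth convention and the boundary cases $d=0$ and $d=1,b=1$ is sound and consistent with the stated formulas.
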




\paragraph{ART (pairwise + Bradley--Terry + DF-QuAD).}
Per parent there are $b^2$ cross-group pairs; across all internal nodes this is $M$.
With a fixed cap $k{=}100$ BT iterations (constant), one BT solve per parent costs
$O(b^2)$ per iteration, so the total BT work is $O(kM)=O(M)$.
Pairwise evaluation is $O(M)$ where an LLM judge is used to compare all cross pair arguments, and bottom-up DF\mbox{-}QuAD aggregation is $O(bN)$
($O(2b)$ per internal node). Since $M=\Theta(bN)$,
\begin{align*}
T_{\text{ART}} &= O(M) + O(M) + O(bN) + O(N)\\
               &= \Theta(bN) = \Theta\!\big(b(2b)^d\big).
\end{align*}
\emph{Space:} $O(N+M)=\Theta(bN)$ if storing all pairwise tallies $\tau_{ab}$;
with streaming BT statistics, peak space can be reduced to $O(N)$.

\paragraph{ArgLLM (no pairwise, no Bradley--Terry).}
ArgLLM performs tree generation, intrinsic scoring, and DF\mbox{-}QuAD only.
Aggregation is $O(bN)$ overall; other steps are $O(N)$:
\begin{align*}
T_{\text{ArgLLM}} &= O(bN) + O(N) \\
                  &= \Theta(bN)
                  \;=\; \Theta\!\big(b(2b)^d\big).\\
\textit{Space}    &= O(N)
\end{align*}

\paragraph{Comparison.}
Asymptotically both are $\Theta(bN)$ (BT iterations capped),
but ART carries a larger leading constant due to pairwise competitions and BT calculations and typically higher space.
so ART is slower by a constant factor (and uses $\Theta(bN)$ space if pairwise
tallies are retained), whereas ArgLLM is lighter in both time constant and space. 

ART parallelizes naturally: each supporter–attacker pair can be scored independently, and each parent’s Bradley–Terry solve runs independently as well. We process the tree level-by-level (all nodes at the same depth in parallel) and batch pair evaluations on \textit{A100} GPU. With this task- and data-parallel structure, ART achieves runtime comparable with ArgLLM.

\section{Prompts}
\label{appendix:prompts}
This section details the prompts employed in our experiments. We utilized distinct templates for several key tasks: Direct Questioning, Chain-of-Thought, generating supporting and attacking arguments, assigning initial scores via a dedicated scoring prompt, and conducting the pairwise comparisons for calibration.

\textbf{Direct Questioning Prompt.}
The prompt in Figure \ref{fig:direct-prompt} was used to generate the direct question for a given claim.

\begin{figure}[!t]
\centering
\includegraphics[width=\linewidth]{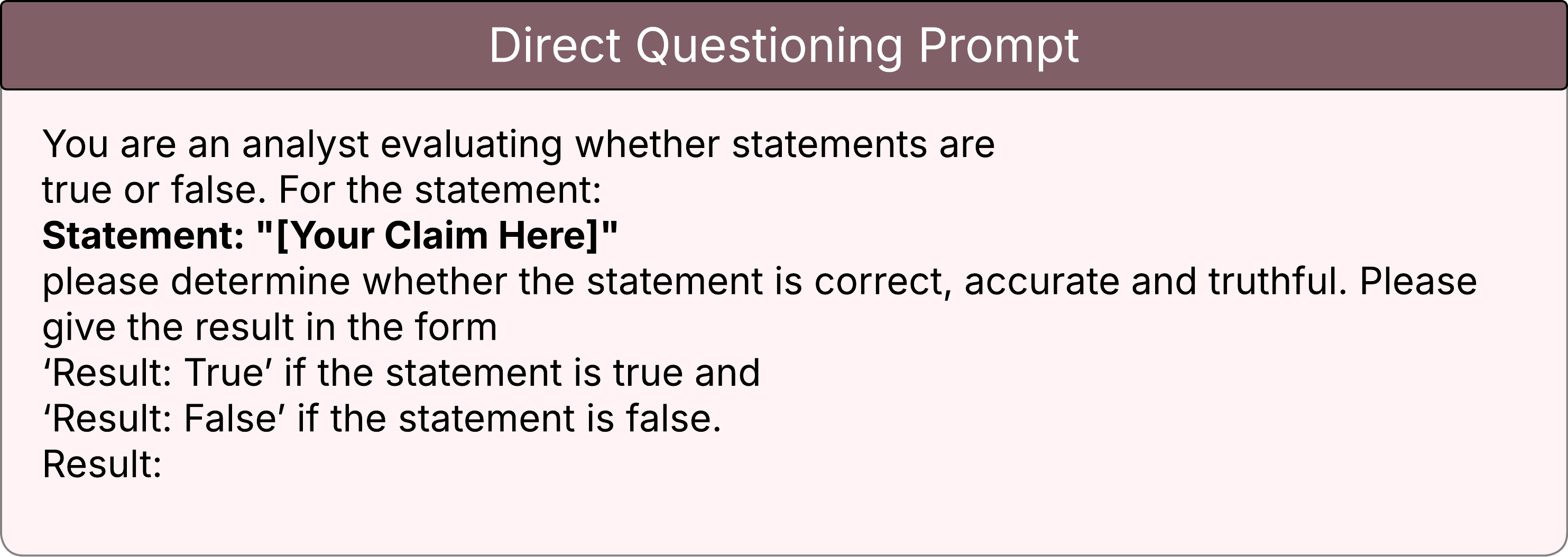}
\caption{Direct prompt.}
\label{fig:direct-prompt}
\end{figure}
\textbf{Chain-of-Thought Prompt.} The prompt in Figure \ref{fig:cot} was used to generate the Chain-of-Thought based reasoning response for a given claim. 

\begin{figure}[!t]
\centering
\includegraphics[width=\linewidth]{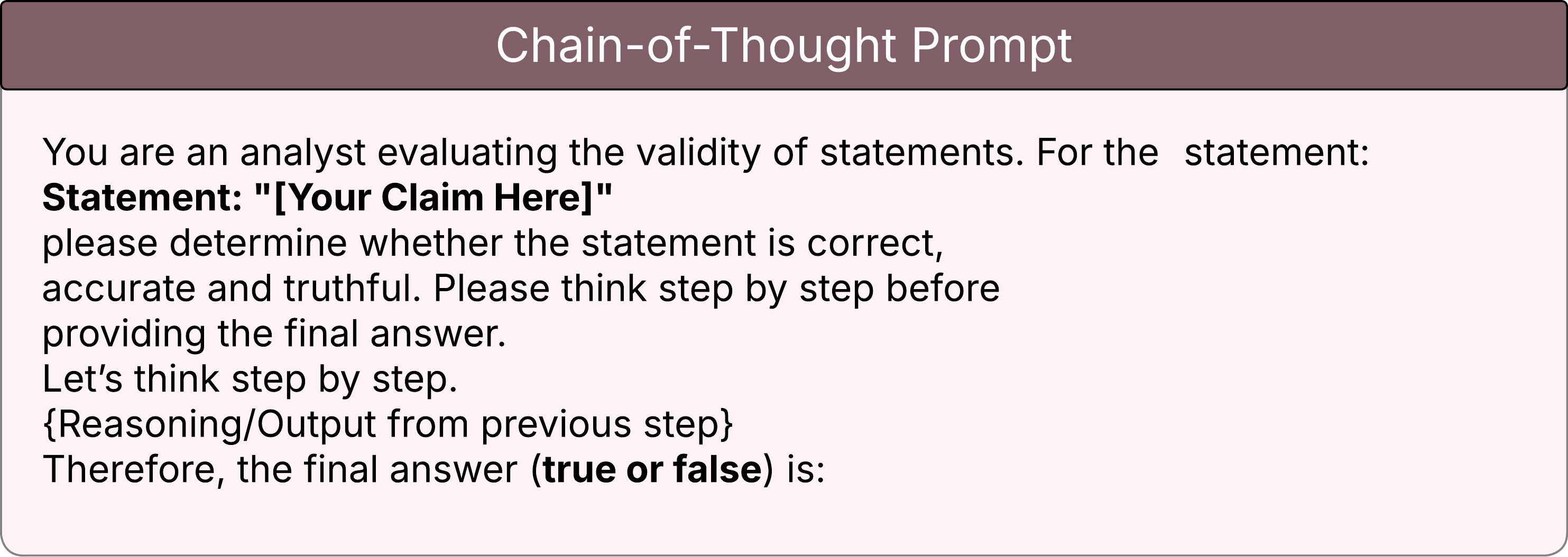}
\caption{Chain-of-thought prompt.}
\label{fig:cot}
\end{figure}

\noindent\textbf{Dynamic Argument Generation Prompt}
~The prompt in Figure \ref{fig:dynamic-arg-gen-prompt} is designed to dynamically generate either a supporting or an attacking argument for a given claim.

\begin{figure}[!t]
\centering
\includegraphics[width=\linewidth]{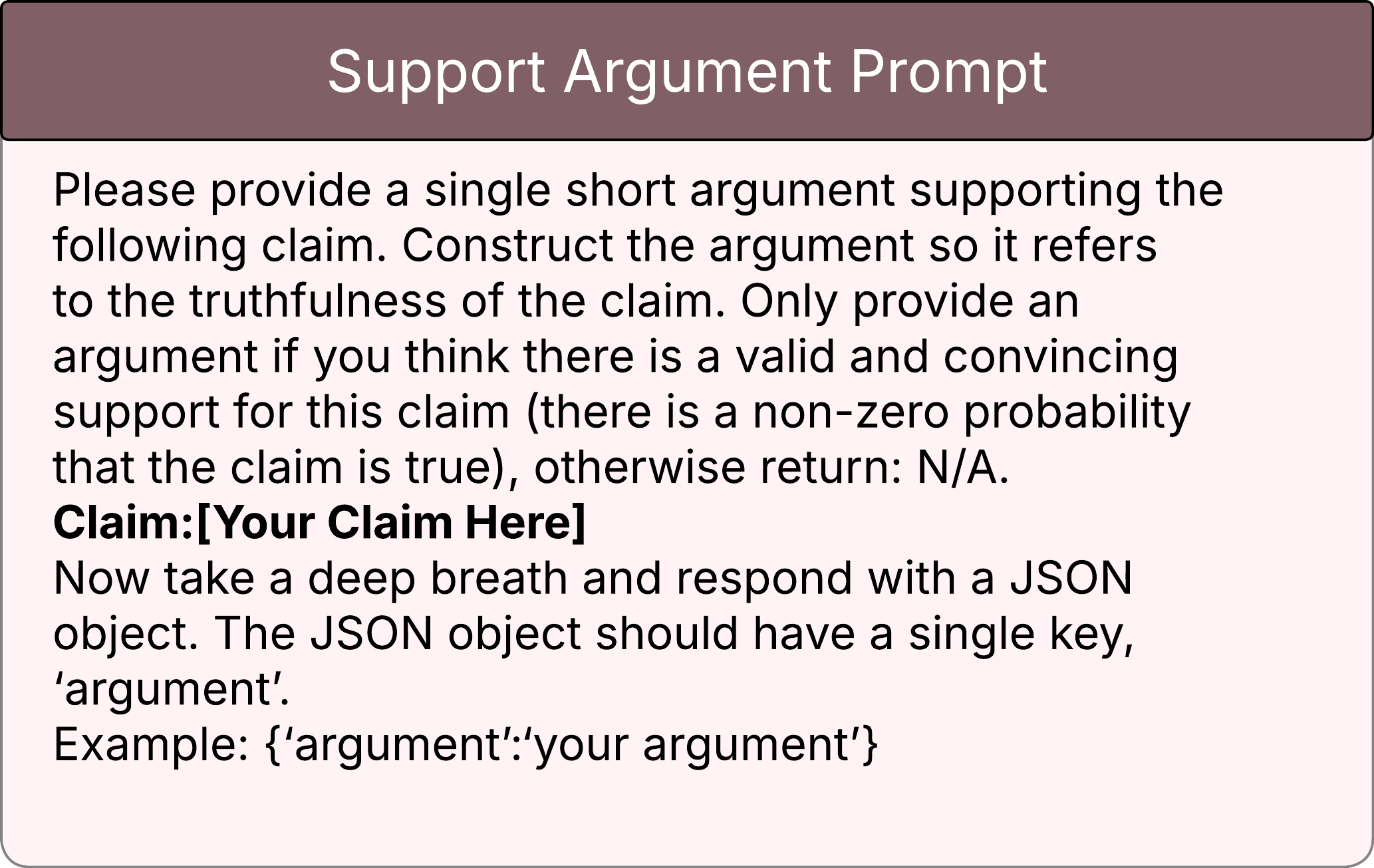}
\caption{Dynamic argument generation prompt. The example is for support argument generation and similarly for the attack argument `supporting' is changed into `attacking' and `support' into `attack'.}
\label{fig:dynamic-arg-gen-prompt}
\end{figure}
\newpage

\noindent\textbf{Dynamic Uncertainty Evaluator Prompt.}
Prompt in Figure \ref{fig:uncertainity-eval-prompt} is designed to dynamically generate Uncertainty scores using LLM as a judge for a support or attack argument. The prompt demonstrates a support Uncertainty Evaluation Prompt, where the language is framed in terms of an argument being in favour of a claim and how it supports the parent statement. For constructing the corresponding attack version, the wording is systematically adjusted: the phrase “in favour of” is replaced with “against” to reflect opposition, and “supports” is replaced with “refutes” to indicate contradiction. This ensures that the structure of the prompt remains consistent while capturing the intended argumentative stance.

\begin{figure}[!t]
\centering
\includegraphics[width=\linewidth]{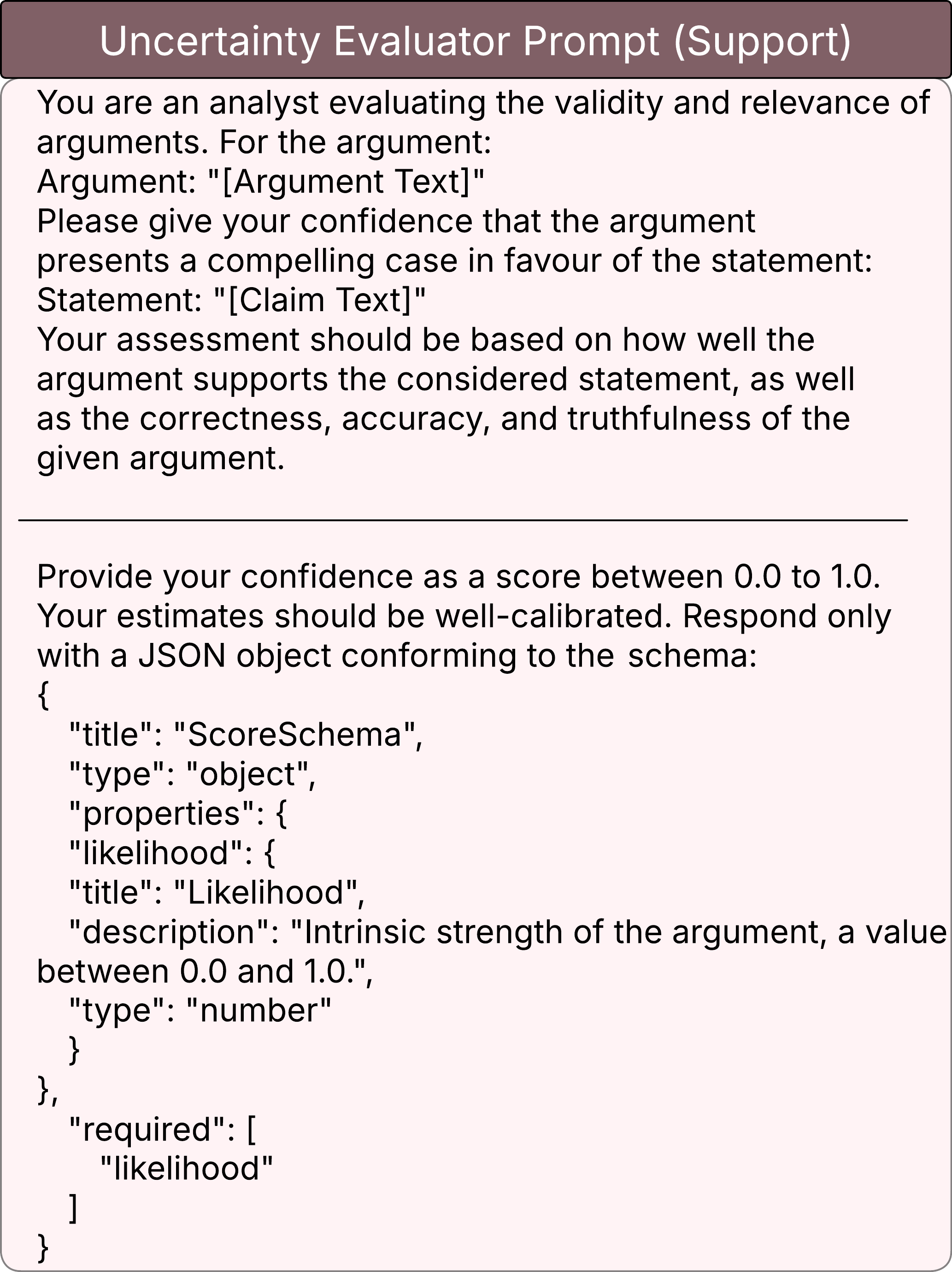}
\caption{Uncertainty evaluator prompt.}
\label{fig:uncertainity-eval-prompt}
\end{figure}

\paragraph{Pairwise Argument Comparison Prompt}
Figure \ref{fig:pairwise-comparison-prompt} illustrates the prompt designed to compare supporting and attacking arguments for a child node in ART. The judge LLM $J$ receives the original statement (the parent node) along with its two child nodes—one supporting and one attacking—and determines which argument is stronger. If both arguments are deemed equally strong, the model should return TIE.

\begin{figure}[!t]
\centering
\includegraphics[width=\linewidth]{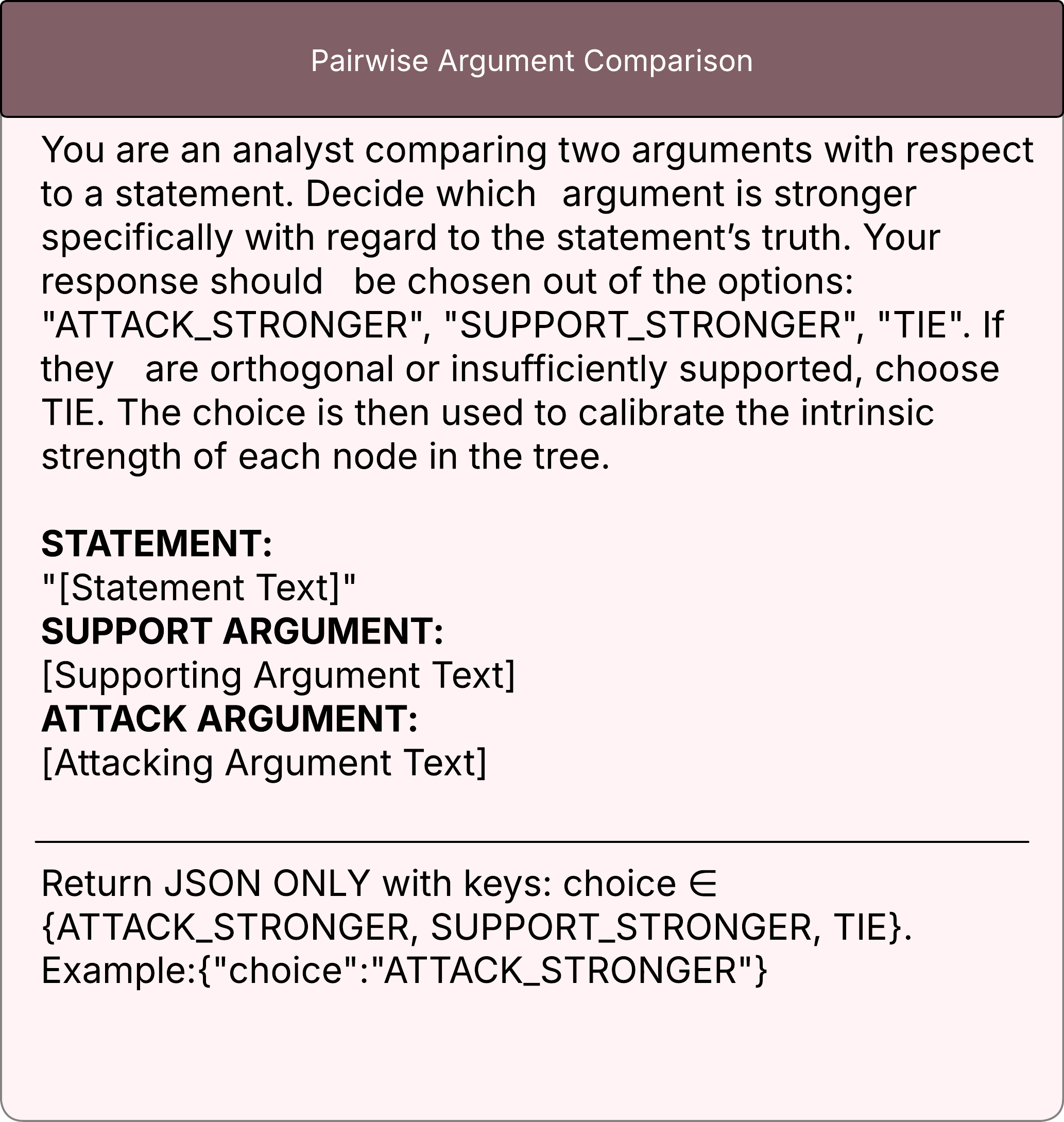}
\caption{Pairwise comparison prompt.}
\label{fig:pairwise-comparison-prompt}
\end{figure}

\begin{footnotesize}
\begin{algorithm*}[!t]
\caption{Calibration via LLM Pairwise Judgments + Bradley--Terry (bipartite)}
\label{alg:calibrate_bt_clearW}
\KwIn{Argument tree $T$ (root id \texttt{db0}), judge LLM $\mathcal{J}$, pairwise prompt $\Pi$, stabilizer $\varepsilon>0$, $\lambda \in [0,1] $}
\KwOut{Calibrated tree $T$}

\textbf{I. Identify calibration parents.}

$root \gets T[\texttt{db0}]$\;
\If{$root=\varnothing$}{
  \Return $T$\;
}
$C \gets \{\,p \in T \mid p.\texttt{supporters}\neq\varnothing \land p.\texttt{attackers}\neq\varnothing\,\}$\;
\If{$C=\varnothing$}{
  \Return $T$\;
}

\textbf{II. Per-parent setup and pairwise counts.}

\ForEach{parent $p \in C$}{
 $S \gets p.\texttt{supporters}, A \gets p.\texttt{attackers}$, 
$U_p \gets S \cup A$, 
$E_p \gets S \times A$

Initialize sparse $W_p[x,y]\gets 0$ for all $(x,y)\in E_p \cup \{(y,x):(x,y)\in E_p\}$\;

  \ForEach{$s \in S$}{
    \ForEach{$a \in A$}{
      $prompt \gets \Pi(p,\ s,\ a)$\;
      
      $winner \gets \mathcal{J}(prompt)$\;
      
      \uIf{$winner=\texttt{SUPPORT}$}{$W_p[s,a] \gets W_p[s,a] + 1$}
      
      \uElseIf{$winner=\texttt{ATTACK}$}{$W_p[a,s] \gets W_p[a,s] + 1$}
      
    }
  }
}
\textbf{III. BT fitting and rescaling (per parent).}

\ForEach{parent $p \in C$}{
  \If{$E_p=\varnothing$}{\textbf{continue}\;}
  Initialize $\theta_u \gets 1$ for each $u \in U_p$\;

  \Repeat{convergence}{
    \ForEach{$u \in U_p$}{
      \uIf{$u \in S$}{
        $wins \gets \sum_{a \in A} W_p[u,a]$\;
        
        $den \gets \sum_{a \in A} \dfrac{W_p[u,a]+W_p[a,u]}{\theta_u+\theta_a}$\;
      }\Else(\tcp*[f]{$u \in A$}){
        $wins \gets \sum_{s \in S} W_p[u,s]$\;
        
        $den \gets \sum_{s \in S} \dfrac{W_p[u,s]+W_p[s,u]}{\theta_u+\theta_s}$\;
      }
      $\theta_u \gets \dfrac{wins}{den+\varepsilon}$\;
    }
    Normalize $\sum_{u \in U_p}\theta_u = 1$\;
  }

  \ForEach{$u \in U_p$}{
    $u.\texttt{initial\_weight} \gets \text{clip}\big((u.\texttt{initial\_weight}\cdot (1-\lambda) + \lambda \cdot \theta_u),\ 0,\ 1\big)$\;
  }
}

\Return $T$
\end{algorithm*}
\end{footnotesize}

\end{document}